\definecolor{myblue}{RGB}{0,160,255}
\newcommand{\argmax}{\operatorname{argmax}}
\newcommand{\mathornot}[1]{\ifmmode #1 \else $#1$\fi}
\renewenvironment{proof}
{
\par\noindent
\textbf{Proof. }\ignorespaces
}
{
\hfill$\square$\par
}
\newcommand{\bis}{\operatorname{bi}}
\newcommand{\row}{\operatorname{row}}
\newcommand{\col}{\operatorname{col}}
\newcommand{\all}{\operatorname{all}}
\newcommand{\myargmin}[1]{\quad\underset{\mathclap{\footnotesize\begin{array}{c}#1\end{array}}}{\operatorname{argmin}}\quad\ }
\newcommand{\appropto}{\mathrel{\vcenter{
  \offinterlineskip\halign{\hfil$##$\cr
    \propto\cr\noalign{\kern2pt}\sim\cr\noalign{\kern-2pt}}}}}
\definecolor{color1}{RGB}{31,119,180}
\definecolor{color2}{RGB}{255,127,14}
\definecolor{color3}{RGB}{44,160,44}
\definecolor{color4}{RGB}{214,39,40}
\definecolor{color5}{RGB}{148,103,189}
\definecolor{color6}{RGB}{140,86,75}
\definecolor{color7}{RGB}{227,119,194}
\definecolor{color8}{RGB}{127,127,127}
\definecolor{color9}{RGB}{188,189,34}
\definecolor{color10}{RGB}{23,190,207}
\definecolor{gold}{RGB}{245, 185, 0}
\definecolor{silver}{RGB}{5, 145, 233}
\definecolor{bronze}{RGB}{218, 57, 80}
\definecolor{Dark2A}{rgb}{0.105, 0.620, 0.467} 
\definecolor{Dark2B}{rgb}{0.851, 0.373, 0.008} 
\definecolor{Dark2C}{rgb}{0.459, 0.439, 0.702} 
\definecolor{Dark2D}{rgb}{0.906, 0.161, 0.541} 
\definecolor{Dark2E}{rgb}{0.400, 0.651, 0.118} 
\definecolor{Dark2F}{rgb}{0.902, 0.671, 0.008} 
\definecolor{Dark2G}{rgb}{0.651, 0.463, 0.114} 
\definecolor{Dark2H}{rgb}{0.400, 0.400, 0.400} 
\newtheorem{proposition}{Proposition}
\newtheorem{definition}{Definition}
\newtheorem{lemma}{Lemma}
\begin{document}

%

%
\runningauthor{Johan Erbani, Sonia Ben Mokhtar, Pierre-Edouard Portier, Elöd Egyed-Zsigmond, Diana Nurbakova}

\twocolumn[

\aistatstitle{On the Normalization of Confusion Matrices: Methods and Geometric Interpretations}

\aistatsauthor{ Johan Erbani \And Sonia Ben Mokhtar \And  Pierre-Edouard Portier}
\aistatsaddress{LIRIS, INSA Lyon,\\CNRS (UMR 5205), France \And LIRIS, INSA Lyon,\\CNRS (UMR 5205), France \And Caisse d'Epargne Rhône Alpes,\\Tour Incity, Lyon, France}

\aistatsauthor{Elöd Egyed-Zsigmond  \And  Diana Nurbakova}

\aistatsaddress{LIRIS, INSA Lyon,\\CNRS (UMR 5205), France \And LIRIS, INSA Lyon,\\CNRS (UMR 5205), France} ]

\begin{abstract}
The confusion matrix is a standard tool for evaluating classifiers, providing a detailed view of model errors. In heterogeneous settings, its entries are influenced by two main factors: class similarity, reflecting how easily the model confuses certain classes, and distribution bias, stemming from imbalanced training or test distributions. Because confusion matrix values jointly reflect both factors, it is difficult to disentangle their individual effects. To address this issue, we introduce bi-normalization via Iterative Proportional Fitting, a generalization of row and column normalization. Unlike standard approaches, this method recovers the underlying structure of class similarity. By disentangling error sources, it enables a more precise diagnosis of model behavior and facilitates classifier improvement. We further establish connections between normalization, importance sampling, and class representations in the model’s latent space, thus offering a clearer interpretation of normalization schemes. Our implementation is publicly available\footnote{\url{https://github.com/JohanErbani/Bi-Normalization}}.
\end{abstract}

\section{INTRODUCTION}
The confusion matrix is a key tool for understanding, evaluating, and improving a model’s behavior~\citep{krstinic2020multi, krstinic2024multi, gortler2022neo, erbani2024confusion}. It is a $C \times C$ square matrix that provides a detailed view of model errors, where $C$ is the number of classes. Each entry $(i,j)$ counts the number of instances with true label $i$ that the model predicted as $j$, for all $i,j \in \{1, \dots, C\}$. It is typically built from the test set~\citep{sammut2011encyclopedia}: starting from a zero matrix, the entry $(i,j)$ is incremented by $1$ whenever a sample with label $i$ is predicted as $j$.

Two main factors shape confusion matrix values: (i) class similarity and (ii) distribution bias~\citep{erbani2024confusion}, as illustrated in Figure~\ref{EFFECTS}. (i) Class similarity refers to confusion between similar classes—the more alike two classes $i$ and $j$ are, the more frequently they are misclassified as each other, increasing values in entries $(i,j)$ and/or $(j,i)$ (Subfigure~\ref{class effect}). While this source of error is intrinsic to the classes themselves, it also reflects inherent model properties: regardless of training, these similarities persist. (ii) Distribution bias refers to errors caused by class imbalance in the predictions or test set. Models trained on imbalanced data tend to overpredict majority classes and underpredict minority ones~\citep{buda2018systematic, leevy2018survey}, inflating the columns of majority classes (Subfigure~\ref{prediction effect}). Similarly, imbalance in the test set affects the rows: majority classes have higher counts, while minority classes have lower ones (Subfigure~\ref{testset effect}). This source of confusion is accidental and varies with training.

Understanding the sources of classification errors enables targeted improvements. For example, errors from class similarity can be reduced by enhancing preprocessing to better distinguish similar classes or by augmenting the training set with more examples of those classes~\citep{ghosh2024class, tian2024novel, temraz2022solving, aggarwal2021minority}. Overprediction errors can be reduced by adjusting the loss function, such as reweighting class contributions to reduce the influence of frequent classes~\citep{fernando2021dynamically, deepak2023brain, chamseddine2022handling, wu2022deep}. However, confusion matrix values reflect a mix of both class similarity and distribution bias (Subfigure~\ref{all}), making it difficult to disentangle their individual contributions.

\begin{figure}
\centering
\caption{How class similarity and distribution bias affect confusion matrices. The first three subfigures isolate each factor, while the last combines them: (a) Test set imbalance, with B as the majority class and C as the minority; (b) Prediction imbalance, with C over-predicted and B under-predicted; (c) Class similarities; (d) Combined influence of all factors. Darker colors indicate higher values.}
\label{EFFECTS}
\begin{subfigure}[b]{0.45\linewidth}
\centering
\caption{Test set: Imbalance, Predictions: Balance, Class similarity: No}\label{testset effect}
\scriptsize
\begin{tabular}{|l|l l l |} \hline 
 & A & B & C \\ \hline
A & \cellcolor{myblue!18} & \cellcolor{myblue!18} & \cellcolor{myblue!18} \\
B & \cellcolor{myblue!33} & \cellcolor{myblue!33} & \cellcolor{myblue!33} \\
C & \cellcolor{myblue!3} & \cellcolor{myblue!3} & \cellcolor{myblue!3} \\ \hline
\end{tabular}
\end{subfigure}
\hfill
\begin{subfigure}[b]{0.45\linewidth}
\centering
\caption{Test set: Balance, Predictions: Imbalance, Class similarity: No}\label{prediction effect}
\scriptsize
\begin{tabular}{|l|l l l |} \hline 
 & A & B & C \\ \hline
A & \cellcolor{myblue!18} & \cellcolor{myblue!3} & \cellcolor{myblue!33} \\ 
B & \cellcolor{myblue!18} & \cellcolor{myblue!3} & \cellcolor{myblue!33} \\
C & \cellcolor{myblue!18} & \cellcolor{myblue!3} & \cellcolor{myblue!33} \\
\hline
\end{tabular}
\end{subfigure}\\

\vspace{.5cm}\begin{subfigure}[b]{0.45\linewidth}
\centering
\caption{Test set: Balance, Predictions: Balance, Class similarity: Yes}\label{class effect}
\scriptsize
\begin{tabular}{|l|l l l |} \hline 
 & A & B & C \\ \hline
A & \cellcolor{myblue!48} & \cellcolor{myblue!3} & \cellcolor{myblue!3} \\
B & \cellcolor{myblue!3} & \cellcolor{myblue!33} & \cellcolor{myblue!18} \\
C & \cellcolor{myblue!3} & \cellcolor{myblue!18} & \cellcolor{myblue!33} \\ \hline
\end{tabular}
\end{subfigure}
\hfill%
\begin{subfigure}[b]{0.45\linewidth}
\centering
\caption{Test set: Imbalance, Predictions: Imbalance, Class similarity: Yes}\label{all}
\scriptsize
\begin{tabular}{|l|l l l |} \hline 
 & A & B & C \\ \hline
A & \cellcolor{myblue!84} & \cellcolor{myblue!24} & \cellcolor{myblue!54} \\
B & \cellcolor{myblue!54} & \cellcolor{myblue!69} & \cellcolor{myblue!84} \\
C & \cellcolor{myblue!24} & \cellcolor{myblue!24} & \cellcolor{myblue!69} \\ \hline
\end{tabular}
\end{subfigure}
\end{figure}

In practice, machine learning workflows often apply row, column, or all normalization to confusion matrices:
\begin{equation*}
\row(M)_{ij}\! =\! \frac{M_{ij}}{M_{i+}},\ \col(M)_{ij}\! =\!\frac{M_{ij}}{M_{+j}},\ \all(M)_{ij}\! =\!\frac{M_{ij}}{M_{++}},
\end{equation*}
where $M$ is a confusion matrix, $M_{i+}=\sum_jM_{ij}$, $M_{+j}=\sum_i M_{ij}$, and $M_{++}=\sum_{i,j}M_{ij}$. Many papers present several normalized matrices, as each normalization highlights different aspects of the model behavior~\citep{gortler2022neo}. However, in imbalanced settings, these normalizations may fail to capture class similarities, as shown in our experiments.

Class similarities in the confusion matrix become visible when all rows and columns have the same total, which typically occurs when both the test and training sets are balanced. When both the test set and predictions are imbalanced, a double normalization of rows and columns at the same time is needed to reveal these similarities. This normalization can be achieved using the Iterative Proportional Fitting (IPF) procedure—also known as the Sinkhorn-Knopp algorithm, biproportional fitting, the RAS method, or simply matrix scaling~\citep{idel2016review}. IPF is widely used in statistics, economics, and computer science~\citep{idel2016review}. We refer to the normalization produced by IPF as \textit{bi-normalization}, denoted by $\bis$. This normalization yields a confusion matrix in which each row and each column sums to $1$: $\bis(M)_{i+} = \bis(M)_{+j} = 1$.

Standard normalizations have a probabilistic interpretation: $\row$, $\col$, and $\all$ correspond to $\mathbb{P}(\hat{Y} = j \mid Y = i)$, $\mathbb{P}(Y = i \mid \hat{Y} = j)$, and $\mathbb{P}(Y = i, \hat{Y} = j)$, respectively~\citep{gortler2022neo}, where $Y$ and $\hat{Y}$ denote the random variables of labels and predictions. We introduce two complementary perspectives that provide further insight into matrix normalization.

\textbf{Importance sampling.} Normalization can be viewed as applying an importance sampling strategy: it reweights each label–prediction pair to match a desired distribution—for instance, balancing labels via row normalization.

\textbf{Model representation.} Normalization is also related to how the model encodes class representations. We show empirically that the overlap of class clusters in the latent space corresponds—depending on how it is measured—to a particular form of normalized confusion matrix.

Standard and bi normalizations can thus be interpreted through these perspectives, providing additional probabilistic and geometric meaning.

Our main contributions are:
\begin{itemize}[noitemsep, topsep=0pt]
\item We show that bi-normalization generalizes row and column normalizations while satisfying key expected properties.
\item We establish connections between confusion matrix normalizations—$\row$, $\col$, $\all$, and $\bis$—and both importance sampling and class representations in the model’s latent space, offering clearer interpretations of normalized confusion matrices.
\item We provide empirical evidence that bi-normalization reveals class similarities more effectively than other normalization methods and supports our geometric interpretation.
\end{itemize}

\section{RELATED WORK}
The first part of this state-of-the-art reviews the application of IPF procedure to normalize contingency tables, while the second focuses on the impact of distribution bias on the confusion matrix. 

\subsection{Normalization Using Iterative Proportional Fitting}
\citet{deming1940least} proposed the use of the Iterative Proportional Fitting (IPF) procedure to estimate cell probabilities in a contingency table under given marginal constraints. 

\citet{ireland1968contingency} address the problem of estimating a new contingency table $q$ from a given contingency table $p$, where each cell $q_{ij}$ represents a probability, subject to known and fixed marginal probabilities $\sum_j q_{ij}=u_i$ and $\sum_i q_{ij}=v_j$. The authors introduce an algorithm minimizing the KL divergence from $p$ to $q$. This procedure is equivalent to IPF~\citep{idel2016review}.

In the fields of remote sensing and geographic modeling, disagreement between maps and reality is commonly displayed in a confusion matrix~\citep{congalton2001accuracy, hardin1997statistical}. When multiple classification or modeling methods are used, the resulting confusion matrices are typically compared to assess significant differences~\citep{hardin1997statistical}. 
Confusion matrix normalization using IPF is a standard analytical technique~\citep{congalton1991review}. 
In this way, differences in sample sizes used to generate the matrices are eliminated and, therefore, individual cell values within the matrix are directly comparable~\citep{congalton1991review}. 

For speaker verification system improvements, \citet{nagineni2010line} use the IPF procedure to normalize confusion matrices and select a cohort set based on similarity modeling for each client speaker.

\subsection{Causes of Errors}
In the context of neural networks, \citet{erbani2024confusion} examine how imbalanced training and test sets influence confusion matrix entries. They introduce the test–training ranking to distinguish sources of error. When some entries deviate from this criterion, it suggests strong similarity between the corresponding classes. However, class similarities can still impact errors even when the ranking is preserved. Moreover, deviations from the ranking are ambiguous—they do not indicate which entry is too high or too low~\citep{erbani2024confusion}.

\subsection{Key Takeaways}
Normalizing contingency tables—particularly confusion matrices—using IPF procedure is not a new idea; it is even considered a standard approach in some disciplines. However, to the best of our knowledge, no existing work in machine learning employs this normalization to separate the effects of distribution bias and class similarity within confusion matrix. This decomposition provides deeper insights into the sources of errors, helping to assess and improve the classifier.

To the best of our knowledge, no prior work has connected confusion matrix normalization to importance sampling or to the structure of class representations in a model’s latent space. We introduce two perspectives—probabilistic and geometric—that offer deeper insight into classifier behavior.

\section{BI-NORMALIZATION}
In this section, we define the bi-normalization, outline the key properties it should satisfy, and show how the IPF procedure can compute it accordingly.

Theoretical analysis is simpler with positive matrices. Since a non-negative confusion matrix $M$ and its strictly positive counterpart $M + \epsilon$ (for small $\epsilon$) exhibit similar model behavior, we propose using $M + \epsilon$ instead. We now assume that $M$ is a positive confusion matrix.

\subsection{Empirical Definition \& Desirable Properties}\label{Informal Definition and Desirable Properties}
Normalization approximates the confusion matrix that would arise under specific experimental conditions. Row normalization corresponds to the confusion matrix obtained from a balanced test set, whereas column normalization corresponds to that obtained from a model producing balanced predictions.

\textbf{Empirical definition.} Bi-normalization captures class similarities by approximating the confusion matrix under balanced label distributions in both the training and test sets. We formalize this by requiring both row and column normalization: $\bis(M)_{i+} = \bis(M)_{+j} = 1$.

Standard normalization methods $\all$, $\row$, and $\col$ satisfy three properties—idempotence, class distribution invariance, and information preservation—described below. By extension, bi-normalization is expected to satisfy these properties as well.

\textbf{Idempotence.} Normalization maps the matrix to specific experimental conditions. Once these conditions are reached, further normalization does not modify the matrix. Accordingly, bi-normalization is expected to satisfy $\bis \circ \bis(M) = \bis(M)$, where $\circ$ denotes composition.

\textbf{Class Distribution Invariance.} If two confusion matrices differ only in the experimental conditions targeted by the normalization operator, then normalization yields similar matrices.

For instance, row normalization targets a balanced test set. Thus, regardless of the test set label distribution, a trained model yields similar row-normalized confusion matrices. Formally, let $S$ and $T$ be the confusion matrices obtained from $\Phi$ on two large test sets with different label distributions (without class extinction).
Then, the error patterns across rows remain similar, i.e., $S_{i,:} \appropto T_{i,:}$ for all $i$, where $\appropto$ denotes equality up to a positive scaling factor\footnote{For vectors or matrices $U$ and $V$ of the same size, $U \appropto V$ means $U \approx a V$ for some $a \in \mathbb{R}_{>0}$.}. This implies that $\row(S) \approx \row(T)$, showing that the row operator is invariant to variations in the test set label distribution.

This invariance is formalized as invariance under left multiplication by a positive diagonal matrix: $\row(A M) = \row(M)$ for any diagonal matrix $A \in \mathbb{R}_{>0}^{C \times C}$. Similarly, column normalization is invariant under right multiplication by a positive diagonal matrix: $\col(M B) = \col(M)$ for any diagonal matrix $B \in \mathbb{R}_{>0}^{C \times C}$.

Bi-normalization should be invariant under both operations, as it combines row and column normalization, i.e., $\bis(A M B) = \bis(M)$.

\textbf{Information Preservation.} Normalization infers, from the original confusion matrix, the one obtained under other experimental conditions. It should therefore preserve as much of the original information as possible, altering it only when necessary.

This requirement can be formalized as an optimization problem that promotes similarity between the original and normalized matrices. We use the Kullback–Leibler (KL) divergence as a dissimilarity measure. Given row and/or column constraints, the normalized matrix is defined as the one closest to the original in terms of KL divergence.

For instance, row normalization satisfies
\begin{equation*}
\row(M)\in
\myargmin{
P \in \mathbb{R}_{>0}^{C \times C}: \\
P_{i+} =1,\ P_{+j} = \sum_{i}\frac{M_{ij}}{M_{i+}} \ \forall\, i,j}
D_{\mathrm{KL}}(P \| M),
\end{equation*}
where $M \in \mathbb{R}_{>0}^{C \times C}$ is the original confusion matrix, and $D_{\mathrm{KL}}(P \| M)$ denotes the KL divergence from $P$ to $M$. Proofs and extensions to other normalization methods are provided in the Appendix. 

Bi-normalization should minimize this optimization problem under the constraints that each row and each column sums to one.

\subsection{Theoretical Definition, Properties \& Estimation}\label{Formal Definition}
This subsection provides the formal definition of the bi-normalization, its properties and how to estimate it (see Appendix for details). 

\begin{definition}
$\bis(M)$ is the unique minimizer of the following constrained problem\footnote{This optimization problem always has a solution, and it is unique, as shown in the appendix.}:
\begin{equation*}
\bis(M)\in 
\myargmin{
P \in \mathbb{R}_{>0}^{C \times C}: \\
P_{i+} = P_{+j} = 1 \ \forall\, i,j
}
D_{\mathrm{KL}}(P \| M)
\end{equation*}
\end{definition}

We now establish key properties:
\begin{proposition}\label{pp}
$\bis(M)$ satisfies idempotence, class distribution invariance, and information preservation as described in Subsection~\nameref{Informal Definition and Desirable Properties}.
\end{proposition}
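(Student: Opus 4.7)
The plan is to verify the three properties separately, all essentially as consequences of the KL-minimization definition of $\bis(M)$ and the uniqueness of its minimizer (deferred to the appendix).

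First I would dispatch \emph{information preservation}: this is literally the defining property of $\bis(M)$, whose feasible region is exactly $\{P>0 : P_{i+}=P_{+j}=1\}$. So nothing needs to be shown beyond quoting the definition.

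Next I would handle \emph{idempotence} by a short fixed-point argument. Since $\bis(M)$ already lies in the feasible set of the problem defining $\bis(\bis(M))$, and since the global minimum of $P \mapsto D_{\mathrm{KL}}(P \| \bis(M))$ over positive matrices is $0$, attained at $P=\bis(M)$, uniqueness of the constrained minimizer forces $\bis(\bis(M))=\bis(M)$.

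The main step is \emph{class distribution invariance}. For positive diagonal $\alpha,\beta \in \mathbb{R}_{>0}^{C \times C}$, I would expand
\begin{equation*}
D_{\mathrm{KL}}(P \| \alpha M \beta) = D_{\mathrm{KL}}(P \| M) - \sum_{i}\log(\alpha_{ii})\,P_{i+} - \sum_{j}\log(\beta_{jj})\,P_{+j},
\end{equation*}
then restrict to the feasible set $\{P_{i+}=P_{+j}=1\}$, where the last two terms collapse to the $P$-independent constants $\sum_i\log\alpha_{ii}$ and $\sum_j\log\beta_{jj}$. The two constrained problems therefore share the same objective up to an additive constant over the same feasible set, hence the same unique minimizer, giving $\bis(\alpha M\beta)=\bis(M)$.

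The hard part, such as it is, lies not in the algebra but in ensuring that the appeals to \emph{uniqueness} used for idempotence and invariance are valid; this is precisely the existence and uniqueness statement the definition defers to the appendix, and I would simply cite it rather than reprove it here.
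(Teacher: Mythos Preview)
Your proposal is correct but takes a genuinely different route from the paper. The paper derives both idempotence and class-distribution invariance from its Lemma~1, the KKT characterization that the unique constrained KL-minimizer is always of the form $D_1 M D_2$ with positive diagonal $D_1,D_2$: for idempotence it notes that $I\,\bis(M)\,I$ has this form relative to $\bis(M)$ and already meets the marginal constraints; for invariance it writes $\bis(\alpha M\beta)=D_1(\alpha M\beta)D_2=(D_1\alpha)\,M\,(\beta D_2)$, which is again of the required diagonal-scaling form relative to $M$ and satisfies the bistochastic constraints, hence must equal $\bis(M)$ by uniqueness. You instead bypass Lemma~1 entirely, getting idempotence from $D_{\mathrm{KL}}(\bis(M)\,\|\,\bis(M))=0$ and invariance from the additive-constant identity for $D_{\mathrm{KL}}(P\,\|\,\alpha M\beta)$ on the bistochastic feasible set. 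Your approach is more elementary and self-contained; the paper's approach reuses a structural lemma that it needs elsewhere (for the RAS decomposition and the histogram weights $\mathbf{a},\mathbf{b}$).

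One small caveat on your idempotence step: with the paper's unnormalized KL, $D_{\mathrm{KL}}(P\,\|\,\bis(M))=\sum_{ij}P_{ij}\ln\bigl(P_{ij}/\bis(M)_{ij}\bigr)$ is \emph{not} bounded below by $0$ over all positive matrices (rescaling $P$ downward drives it negative), so ``the global minimum over positive matrices is $0$'' is false as stated. What you actually need is that on the feasible set every $P$ satisfies $P_{++}=C=\bis(M)_{++}$, so Gibbs' inequality applies after dividing by $C$ and the constrained minimum is $0$, attained uniquely at $P=\bis(M)$. With that one-line fix the argument is clean.
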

The last property is satisfied by definition, while the two others follow directly from it, as shown in the appendix.

Although $\operatorname{bi}(M)$ has no closed form, it can be estimated using the IPF algorithm (Algorithm in Appendix). In our case, it consists of normalizing the matrix alternately by rows and columns until convergence\footnote{Let $Q^{(t)}$ be the matrix produced by IPF at iteration $t$. Convergence is reached when $Q^{(t)}_{i+}$ and $Q^{(t)}_{+j}$ are sufficiently close to $1$ under a fixed tolerance.}. For example, if convergence is reached in two iterations, then
$
\col \circ \row \Big( \col \circ \row(M)\Big) \approx \bis(M).
$

The IPF procedure always converges when the input matrix $M$ is positive~\citep{idel2016review}. Let $M_{i+} = u_i$ and $M_{+j} = v_j$ for all $i$ and $j$, and denote by $Q$ the limit of the IPF procedure\footnote{Let $Q^{(t)}$ be the matrix produced by IPF at iteration $t$. Then $Q^{(t)} \to Q$ as $t \to \infty$.}. Then, $Q$ solves the following optimization problem~\citep{kurras2015symmetric, idel2016review}:
\begin{equation*}
Q \in
\myargmin{
P \in \mathbb{R}_{>0}^{C \times C}: \\
P_{i+} = u_i, P_{+j} = v_j \ \forall\, i,j
}  
D_{\mathrm{KL}}(P \| M)
\end{equation*}

This directly implies the following:
\begin{proposition}
$\operatorname{bi}(M)$ can be approximated using the IPF procedure with row and column constraints set to $1$. We have $\operatorname{bi}(M) = Q \approx \hat{Q}$, where $Q$ is the theoretical IPF limit and $\hat{Q}$ is its empirical estimate.
\end{proposition}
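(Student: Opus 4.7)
The plan is to observe that the claim reduces almost immediately to the cited characterization of the IPF limit together with the uniqueness of the bistochastic minimizer. I would structure the argument in three short steps and never touch the internals of the IPF iteration.

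First, I would verify that the choice $u_i = v_j = 1$ is admissible for Algorithm~\ref{algorithm}: the precondition $u_+ = v_+$ is satisfied because both sums equal $C$, and the input matrix $M$ is positive by the global assumption made at the start of the section. This guarantees, by the cited convergence result for IPF on positive matrices, that the sequence $Q^{(t)}$ produced by Algorithm~\ref{algorithm} converges to some limit $Q$, and that the stopping criterion $\|Q^{(t)}_{:+}-u\|_1+\|Q^{(t)}_{+:}-v\|_1\le\epsilon$ is eventually met, so the returned $\widehat{Q}$ is a well-defined finite-iteration approximation to $Q$.

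Second, I would invoke the variational characterization of the IPF limit already recalled just before the statement: $Q$ solves
\begin{equation*}
Q\in\myargmin{
P\in\mathbb{R}_{>0}^{C\times C}:\\
P_{i+}=u_i,\ P_{+j}=v_j\ \forall i,j
}D_{\mathrm{KL}}(P\|M).
\end{equation*}
Specializing to $u_i=v_j=1$ yields exactly the constrained problem that defines $\bis(M)$. Since the definition asserts (and the appendix shows) that this problem has a unique minimizer, we conclude $Q=\bis(M)$. Combined with the finite-iteration convergence, this gives $\bis(M)=Q\approx\widehat{Q}$, which is the statement.

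The only point requiring a sentence of care is the uniqueness, but that has already been asserted as a footnote to the definition and is therefore available to quote; there is no real obstacle beyond checking that the specialized constraints match. In particular, I would not need to reprove convergence of IPF or redo the KL variational analysis, since both are invoked from earlier in the excerpt. The proof is thus essentially a one-line application of a general theorem to a specific choice of marginals, and the expected length is only a few lines in the final write-up.
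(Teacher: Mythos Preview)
Your proposal is correct and matches the paper's own approach: the paper states the proposition as a direct consequence of the cited variational characterization of the IPF limit, specialized to $u_i=v_j=1$, which is exactly what you do. If anything, you are slightly more explicit in checking the admissibility condition $u_+=v_+$ and in invoking uniqueness, but the argument is the same.
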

Since $\operatorname{bi}(M)$ is doubly stochastic\footnote{A doubly stochastic matrix is a square matrix with nonnegative entries such that each row and each column sums to $1$.} and $M$ is positive, the IPF procedure converges linearly~\citep{idel2016review}.

\section{PROBABILISTIC \& GEOMETRIC VIEWS OF NORMALIZATION}
This section introduces two perspectives on normalization: (i) importance sampling and (ii) class representations. We first show that normalization can be interpreted as importance sampling. We then propose to model class representations as volumes, and finally link the overlaps of these volumes to normalized confusion matrices.

\subsection{Normalization as Importance Sampling}\label{IS}
The normalization process can be viewed as an importance sampling strategy to match the desired distribution.

Let $(x_1, y_1), \ldots, (x_N, y_N)$ be the dataset used to construct the confusion matrix $M$, where $(x_k, y_k)\in \mathcal{X} \times [C]$, $\mathcal{X}$ is the input space, $C$ is the number of classes, and $[C]=\{1,\ldots,C\}$ is the set of class indices. For each $k=1,\ldots,N$, the model prediction $\hat{y}_k \in [C]$ is defined by
$$
\hat{y}_k \in \argmax_{i=1}^C \Phi(x_k)_i,
$$
where $\Phi$ denotes the model. 

\textbf{Weighted sum.} By definition, the confusion matrix is
$
M = \sum_{k=1}^N E_{y_k \hat{y}_k},
$
where $E_{ij}$ denotes the $C \times C$ matrix with a $1$ at entry $(i,j)$ and $0$ elsewhere. Each matrix $E_{y_k\hat{y}_k}$ is a confusion matrix constructed from the single pair $(y_k,\hat{y}_k)$, capturing its individual contribution to $M$.

Standard and bi normalizations can be obtained by multiplying $M$ on the left and right by diagonal matrices. Specifically, let $D^l$ and $D^r$ denote diagonal matrices (with $l$ for left and $r$ for right), and let $M$ be the original confusion matrix. Then, $D^l M D^r$ yields the normalized version of $M$. 

For instance, setting
$$
D^l=\operatorname{diag}(\frac{1}{M_{1+}}, \ldots, \frac{1}{M_{C+}})\text{ and }D^r=I
$$
yields the row-normalized matrix, where $\operatorname{diag}(d_1,\ldots,d_C)$ denotes a diagonal matrix with entries $d_1,\ldots,d_C$ on its diagonal, and $I$ the identity. Other normalizations are presented in the Appendix.

As a result, normalization can be expressed as a weighted sum of the individual contributions $E_{y_k\hat{y}_k}$:
$$
D^l M D^r = \sum_{k=1}^N D^l_{y_k}D^r_{\hat{y}_k}E_{y_k \hat{y}_k},
$$
where each contribution $E_{y_k \hat{y}_k}$ is weighted by the product $D^l_{y_k} D^r_{\hat{y}_k}$. 

For clarity, the weight $D^l_{y_k} D^r_{\hat{y}_k}$ can be expressed through a weight function $\omega$. For instance, the weight function $\omega_r$ (with $r$ for row) for row-normalization is 
$$
\omega_r : (y,\hat{y}) \mapsto \frac{1}{M_{y+}},\ \row(M) = \sum_{k=1}^N \omega_r(y_k,\hat{y}_k) E_{y_k \hat{y}_k}.
$$
In the same way, we define $\omega_{a}$ ($a$ for all), $\omega_c$ ($c$ for column) and $\omega_b$ ($b$ for bi), yielding all, column, and bi normalizations, respectively (see Appendix for explicit definitions). 

When the dataset is viewed as a realization of random variables, this process corresponds to importance sampling.

\textbf{Importance Sampling.} Let $(Y_1, \hat{Y}_1), \ldots, (Y_N, \hat{Y}_N)$ be independent and identically distributed (i.i.d.) random pairs, representing label–prediction pairs, and $f$ be their joint distribution $f(i,j) = \mathbb{P}(Y=i,\hat{Y}=j)$.
The random counterpart of $\all(M)$ converges almost surely (a.s.) to
$$
\frac{1}{N}\sum_{k=1}^N E_{Y_k\hat{Y}_k}\underset{N\rightarrow \infty}{\overset{\text{a.s.}}{\longrightarrow}}\mathbb{E}[E_{Y\hat{Y}}],
$$
where
$$
\mathbb{E}[E_{Y\hat{Y}}]_{ij} = \mathbb{E}[\mathds{1}_{Y=i, \hat{Y}=j}]=\mathbb{P}(Y=i,\hat{Y}=j)=f(i,j),
$$
with $\mathds{1}$ the indicator function.

Importance sampling refers to Monte Carlo methods that approximate expectations under a target distribution $g$ by reweighting samples drawn from a proposal distribution $f$~\citep{tokdar2010importance}. This approach is, for instance, useful when direct samples from $g$ are unavailable. To approximate the expectation of $E_{Y,\hat{Y}}$ when $(Y,\hat{Y})$ is distributed according to $g$ rather than $f$, we use the estimator $\hat{\mu}_g$, defined as
$$
\begin{aligned}
\hat{\mu}_g=&\tfrac{1}{N}\sum_{k=1}^N \tfrac{g(Y_k,\hat{Y}_k)}{f(Y_k,\hat{Y}_k)} E_{Y_k\hat{Y}_k} \\
&\underset{N\rightarrow \infty}{\overset{\text{a.s.}}{\longrightarrow}}\mathbb{E}[\tfrac{g(Y,\hat{Y})}{f(Y,\hat{Y})}E_{Y\hat{Y}}]
=\mathbb{E}_{g}[E_{Y\hat{Y}}]
\end{aligned}
$$
with
$$
\begin{aligned}
\mathbb{E}_{g}[E_{Y\hat{Y}}]_{ij} \!=\!\mathbb{E}_g[\mathds{1}_{Y=i, \hat{Y}=j}]
\!=\!\mathbb{P}_{g}(Y=i,\hat{Y}=j)\!=\!g(i,j),
\end{aligned}
$$
where $\mathbb{E}_{g}$ and $\mathbb{P}_{g}$ denote expectation and probability assuming the variables follow the law $g$. 

For importance sampling to remain accurate, the variance of each entry of the estimator $\hat{\mu}_g$ must remain small, which occurs when $f$ is approximately proportional to $g$~\citep{tokdar2010importance}.

By selecting different choices of $g$, standard normalizations arise naturally. As an example, consider row normalization (see Appendix for other normalizations). Setting $g(i,j) = \mathbb{P}(\hat{Y}=j \mid Y=i)$\footnote{Note that $g$ is not a normalized distribution, since $\sum_{i,j} g(i,j) = C$ rather than $1$.} recovers the row-normalized confusion matrix.

In this case, the importance sampling ratio becomes
$$
\frac{g(i,j)}{f(i,j)} = \frac{\mathbb{P}(\hat{Y}=j\mid Y=i)}{\mathbb{P}(Y=i,\hat{Y}=j)} = \frac{1}{\mathbb{P}(Y=i)}\approx \frac{N}{M_{i+}}.
$$
Accordingly, the empirical estimator is
$$
\begin{aligned}
\tfrac{1}{N}\sum_{k=1}^N \tfrac{N}{M_{y_k+}} E_{y_k \hat{y}_k} = \sum_{k=1}^N \omega_r(y_k,\hat{y}_k) E_{y_k \hat{y}_k}
=\row(M).
\end{aligned}
$$

For row normalization to be reliable, the importance sampling ratio should remain controlled. In particular, this requires that each class $i$ is sufficiently represented in the test set; otherwise, the estimator exhibits high variance and becomes unstable.

\textbf{Key Takeaways.} Normalization is an importance sampling procedure: each pair is reweighted to shift the empirical distribution toward a target one. Reliability requires that the two distributions are not too dissimilar.

\subsection{Class Representations as Volumes}\label{Class Representations as Volumes}
This subsection introduces a geometric perspective on class representations in the model’s latent space. The key idea is to characterize class clusters as multidimensional histograms.

\textbf{Representation Space \& Dimensionality Reduction.} 
We use the output of the final pre-logit layer as the class representation space, a procedure already employed in previous work~\citep{beery2020synthetic,krizhevsky2012imagenet,zhu2022deep}. 
Let $\varepsilon$ be the embedding function, so that $\Phi(x) = \operatorname{Softmax} \circ \operatorname{Logit} \circ\ \varepsilon(x)$, where $\varepsilon(x) \in \mathbb{R}^n$ for each $(x, y)$ in the training set $\mathcal{D}$.
 
To facilitate histogram construction, we project the embeddings into a lower-dimensional space using Principal Component Analysis (PCA). Let $P$ be the projection matrix with $m < n$, so that $P\varepsilon(x) \in \mathbb{R}^m$. 

\textbf{Class Clusters \& Histograms.} For each label $i$, we define the cluster of projected embedded points as 
$
\mathcal{C}_i = \{P\varepsilon(x) : (x, y) \in \mathcal{D},\ y = i\}.
$
Similarly, the cluster of points predicted as class $j$ is 
$
\hat{\mathcal{C}}_j = \{P\varepsilon(x) : (x, y) \in \mathcal{D},\ \hat{y} = j\}.
$

For each label and prediction clusters, we construct a multidimensional, non-overlapping histogram~\citep{thaper2002dynamic} (see Appendix for details). Each histogram is interpreted as a volume in $\mathbb{R}^{m+1}$, providing a geometric view of the clusters. Figure~\ref{fig:voxels} shows a toy example illustrating histograms of class clusters. 

\begin{figure}[ht]
    \centering
    \includegraphics[width=.7\linewidth]{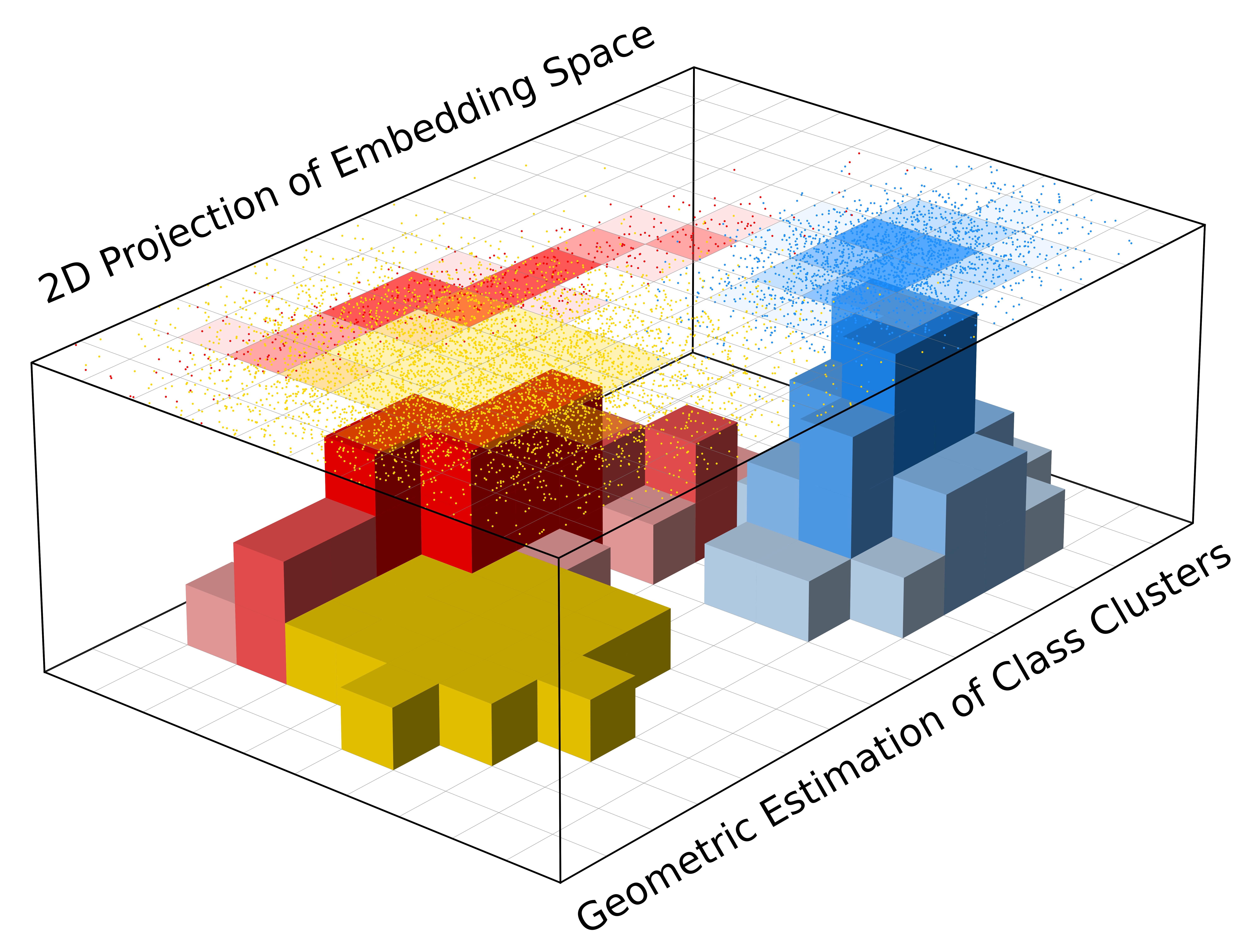}
    \caption{Toy example showing a 2D projection of embedded observations forming class clusters (top map). Each cluster is associated with a histogram indicating the spatial distribution of points (bottom map).}
    \label{fig:voxels}
\end{figure}

\textbf{Weighted Histograms.} Following the importance sampling view (Subsection~\ref{IS}), we weight each point to construct weighted histograms, using the schemes $\omega_a$, $\omega_r$, $\omega_c$, and $\omega_b$. For a given bin, its height is defined not by the number of points it contains, but by the sum of their weights. 

For instance, consider cluster $\mathcal{C}_i$ and the weighting $\omega_c$. The height of a given bin $b$ is
$
\sum_{(x,y)\in\mathcal{D}} \omega_c(y,\hat{y})\mathds{1}_{y=i} \mathds{1}_{P\varepsilon(x)\in b},
$
whereas in the unweighted histogram it is
$
\sum_{(x,y)\in\mathcal{D}} \mathds{1}_{y=i} \mathds{1}_{P\varepsilon(x)\in b}.
$

We denote by $V(\mathcal{C}, \omega)$ the hypervolume in $\mathbb{R}^{m+1}$ of the weighted histogram built from cluster $\mathcal{C}$ using the weighting $\omega$.

\subsection{Geometric View of Normalized Confusion Matrices}\label{geom}
This subsection introduces a new matrix, referred to as Geometric Confusion Matrix (GCM), which quantifies the overlap between the weighted histogram of label and prediction clusters:
\begin{equation*}
\begin{aligned}
\bigg(\operatorname{GCM}_{\omega}\bigg)_{ij} &= \lambda\bigg(V(\mathcal{C}_i, \omega)\cap V(\hat{\mathcal{C}}_j, \omega)\bigg),\\
\end{aligned}
\end{equation*}
for $i,j=1,\dots, C$, where $\lambda$ denotes the Lebesgue measure. The Lebesgue measure of $V(\mathcal{C}_i, \omega)\cap V(\hat{\mathcal{C}}_j, \omega)$ corresponds to the volume of the intersection between the $\omega$-weighted histograms of $\mathcal{C}_i$ and the one of $\hat{\mathcal{C}}_j$. Unlike the standard confusion matrix, the GCM takes into account the spatial organization of embedded points (see Appendix for details).

We now provide geometric views of confusion matrix normalizations, all claims are supported by experimental results (see Section~\ref{Empirical Results}).

\textbf{All-normalization.} All-normalized matrix approximates the overlap between the unweighted histograms of label and prediction clusters:
$\all(M) \appropto \operatorname{GCM}_{\omega_a}$,
where $\appropto$ means approximately equal up to a scaling factor\footnote{Let $M$ and $N$ be matrices of the same size. We write $M \appropto N$ to indicate that $M \approx \alpha N$ for some scalar $\alpha \in \mathbb{R}_{>0}$.}.

\textbf{Row-normalization.} Row-normalized matrix approximates the overlap between normalized label histograms and the resulting weighted prediction histograms:
$\row(M) \appropto \operatorname{GCM}_{\omega_r}$.

More precisely, up to a known scaling factor, the label histograms are normalized in the sense that $\lambda\big(V(\mathcal{C}_i, \omega_r)\big) = 1$, whereas the volume of the prediction histograms, $\lambda\big(V(\hat{\mathcal{C}}_j, \omega_r)\big)$ are not fixed a priori (see Appendix for details). Accordingly, label histograms are normalized, while prediction histograms are scaled according to $\omega_r$.

\textbf{Col-normalization.} Similarly, column-normalized matrix approximates the overlap between normalized prediction histograms and resulting weighted label histograms:
$\col(M) \appropto \operatorname{GCM}_{\omega_c}$.

\textbf{Bi-normalization.} Finally, bi-normalized matrix approximates the overlap between normalized histograms of both label and prediction clusters:
$\bis(M) \appropto \operatorname{GCM}_{\omega_b}$.

These correspondences provide a geometric interpretation of normalized confusion matrices and illustrate how confusion matrices capture the organization of classes in the model’s latent space.

\section{EXPERIMENTAL SETUP}\label{Experimental Setup}
This section describes our experimental setup. 

\subsection{Datasets \&  Heterogeneity} 
We conduct experiments on four datasets: MNIST~\citep{lecun1998gradient}, Fashion-MNIST~\citep{xiao2017fashion}, CIFAR-10~\citep{krizhevsky2009learning}, and STL-10~\citep{STL10}. 

Normalizations methods differ mainly on non-diagonal matrices. For example, row, column, and bi normalizations yield the same result on diagonal matrices. To encourage non-diagonal confusion matrices, we make MNIST and Fashion-MNIST harder by applying random rotations of 0°, 90°, 180°, or 270° during preprocessing, applied to both the training and test images.

Bi-normalization is particularly useful under heterogeneous settings. To simulate data heterogeneity, we sample from the original datasets with a Dirichlet distribution of concentration $\alpha$, following~\citep{allouah2023fixing, hsu2019measuring, erbani2025weighted}. We consider five levels—very low, low, medium, high, and extreme—corresponding to $\alpha \in \{10, 3, 1, 0.3, 0.1\}$.

To avoid class extinction, we enforce a minimum representation of $20\%$ of the original class size. For example, in a dataset with $1000$ samples per class, at least $200$ samples are retained for each class, while the remaining samples are allocated according to a Dirichlet distribution.

\subsection{Models \& Training}
CNN models (see Appendix for details) are trained using stochastic gradient descent with cross-entropy loss, a batch size of $32$, a learning rate of $10^{-3}$, a momentum of $0.9$, and a weight decay of $10^{-4}$.

As mentioned above, the main differences between normalizations appear in the off-diagonal confusion matrices. Therefore, training is limited to a maximum of $10$ epochs, and stops earlier if the classifier reaches $60\%$ balanced test accuracy.

\subsection{Metric}\label{Metric}
Let $S$ and $T$ be two confusion matrices in $\mathbb{R}_{\geq 0}^{C \times C}$, possibly already normalized.

To measure the similarity between confusion matrices, we use their overlap:
$$
\operatorname{Overlap}(S, T) = \sum_{ij} \min\left(\all(S)_{ij}, \all(T)_{ij}\right),
$$
where $\min$ denotes the element-wise minimum (see Appendix for the rationale behind reusing $\all$ normalization). This score ranges from $0$ to $1$, with higher values indicating greater similarity. It reaches $1$ if and only if $T = S$.

This metric is strictly equivalent to the $L^1$ distance, since $\|\all(S) - \all(T)\|_1 = 2 - 2\operatorname{Overlap}(S, T)$ (see Appendix for proof).

\subsection{Baselines \& Experiments}\label{Baselines and Experiments}
We compare bi-normalization with standard normalizations: $\row$, $\col$, and $\all$. Each experiment is repeated over MNIST, Fashion-MNIST, CIFAR-10, and STL-10, with five heterogeneity levels and $30$ random seeds.

\textbf{Experiment 1.} This experiment compares the confusion matrix obtained from balanced datasets (denoted as $M_1$) with normalized versions of the confusion matrix obtained from imbalanced datasets (denoted as $M_2$).
\begin{enumerate}[noitemsep, topsep=0pt]
\item Initialize a model with a fixed random seed, then create two deep copies: model 1 and model 2.
\item Train model 1 on a balanced training set and compute its confusion matrix $M_1$ using a balanced test set.
\item Sample an imbalanced training set and test set, train model 2 on the imbalanced training set, and compute its confusion matrix $M_2$ on the imbalanced test set.
\item Apply normalization techniques to $M_2$ and evaluate their similarity to $M_1$.
\end{enumerate}
This experiment tests whether bi-normalization reveals class similarities under imbalanced data more effectively than other approaches.

\textbf{Experiment 2.} This experiment, conducted in an imbalanced setting, compares different versions of the GCM—$\operatorname{GCM}\!{\omega_a}, \operatorname{GCM}\!{\omega_r}, \operatorname{GCM}\!{\omega_c}, \operatorname{GCM}\!{\omega_b}$—with normalized confusion matrices derived from $M$—$\row(M)$, $\col(M)$, $\all(M)$, and $\bis(M)$.
\begin{enumerate}[noitemsep, topsep=0pt]
\item Initialize the model with a fixed seed.
\item Sample imbalanced training and test sets. Train the model. Compute both the GCM variants and the normalized confusion matrices using the test set.
\item Measure pairwise similarities.
\end{enumerate}
This experiment evaluates how each normalization reveals the structure of class representations in the model’s latent space under a specific weighted histogram.

Details on the construction of multivariate histograms required for GCM are provided in the Appendix.

\begin{figure}[!ht]
    \centering
    \begin{subfigure}[b]{0.99\linewidth}
        \centering
        \includegraphics[width=0.99\linewidth]{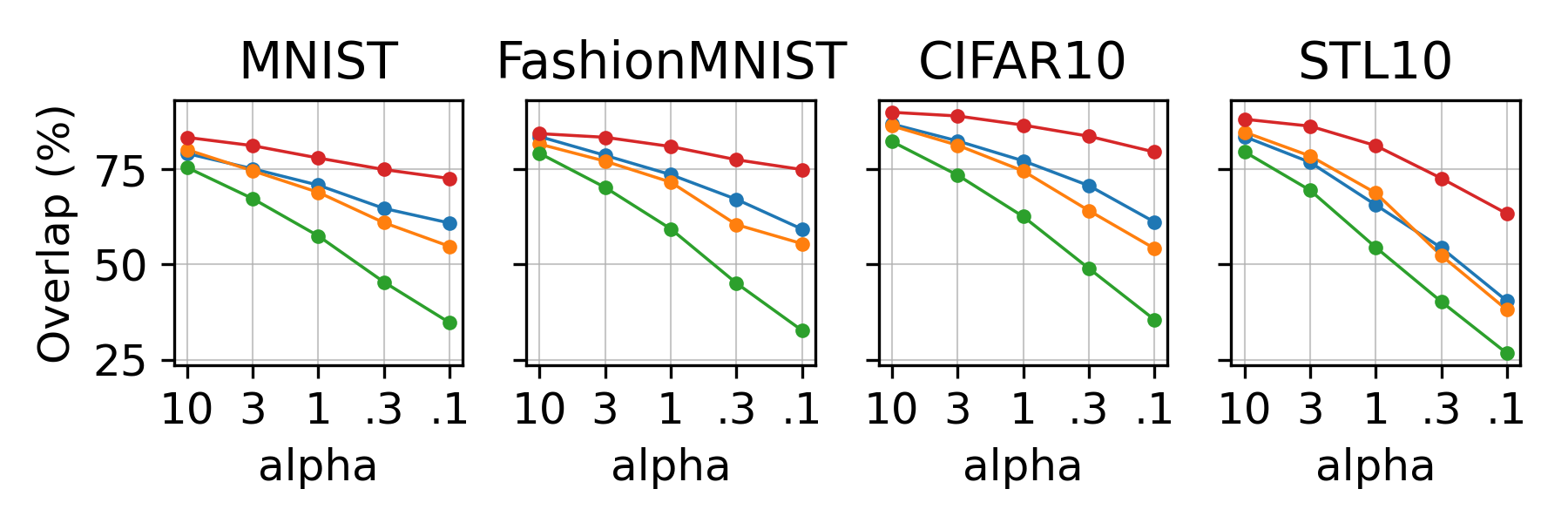}
        \vspace{-.6cm}
        \caption{Target: Matrix from balanced setting}
        \label{fig:subfig1}
    \end{subfigure}\\
    \vspace{.3cm}
    \begin{subfigure}[b]{0.99\linewidth}
        \centering
        \includegraphics[width=0.99\linewidth]{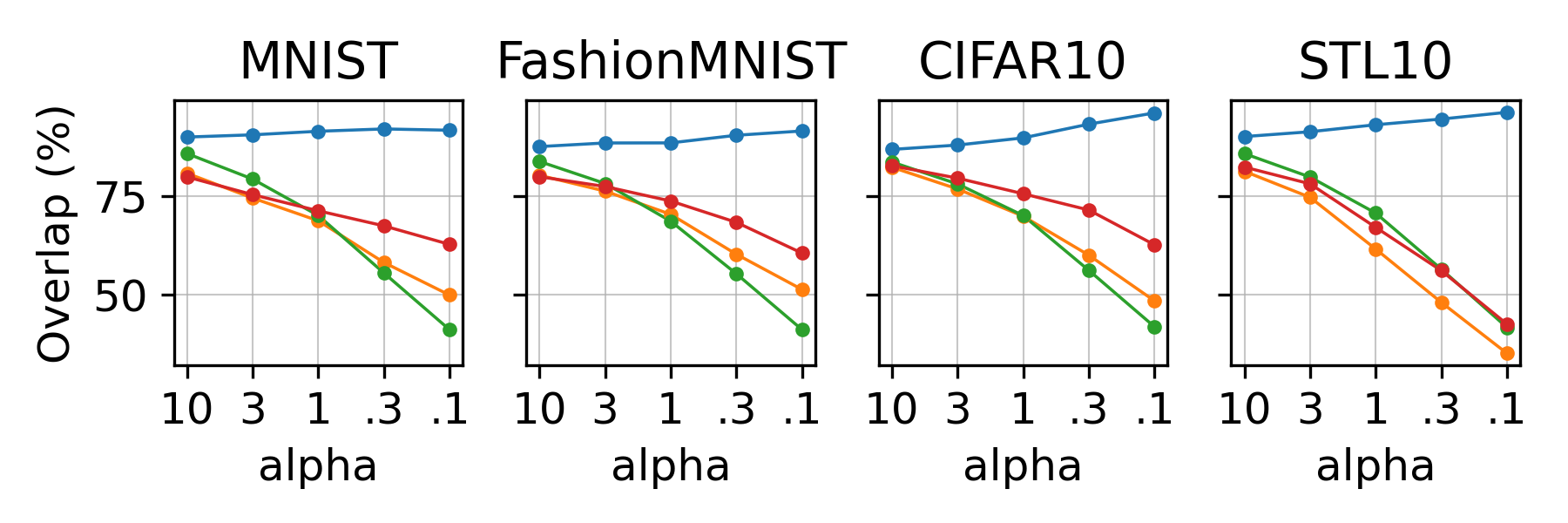}
        \vspace{-.6cm}
        \caption{Target: $\operatorname{GCM}\!{\omega_r}$}
        \label{fig:subfig2}
    \end{subfigure}\\
    \vspace{.3cm}
    \begin{subfigure}[b]{0.99\linewidth}
        \centering
        \includegraphics[width=0.99\linewidth]{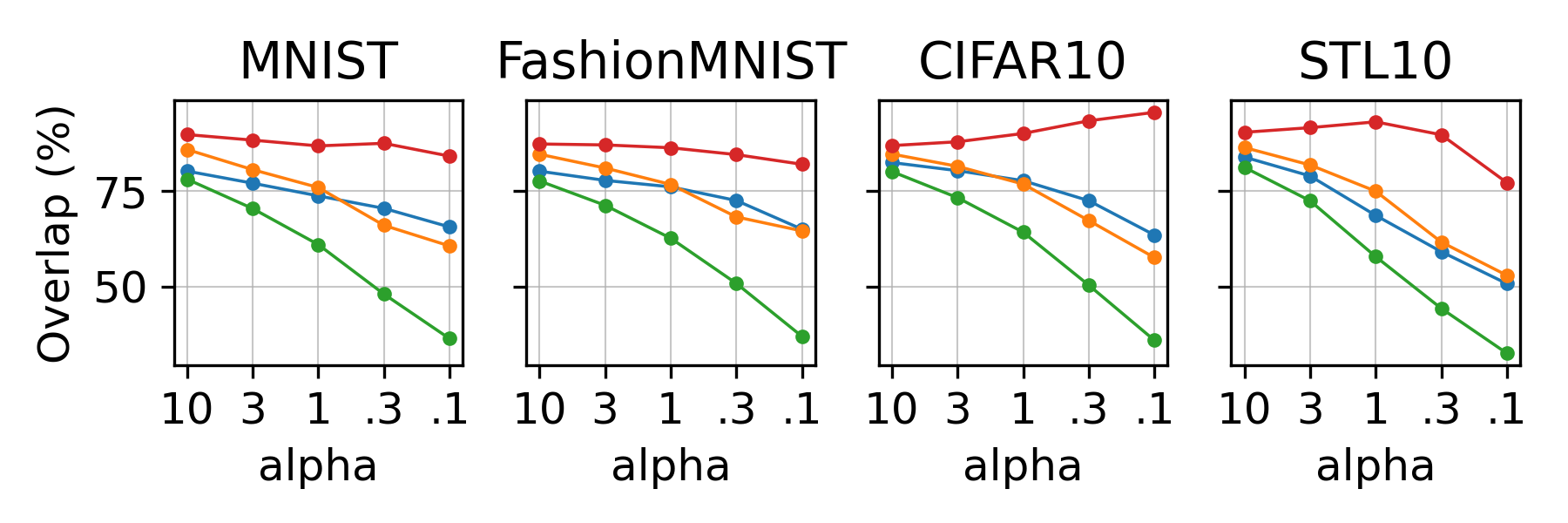}
        \vspace{-.6cm}
        \caption{Target: $\operatorname{GCM}\!{\omega_b}$}
        \label{fig:subfig3}
    \end{subfigure}
    \caption{Overlap (\%) between target matrices and normalized matrices. Legend:~\textcolor{color4}{$\;\bullet\!$}~bi
    \textcolor{color1}{$\;\bullet\!$}~row 
    \textcolor{color2}{$\;\bullet\!$}~col 
    \textcolor{color3}{$\;\bullet\!$}~all
    }
    \label{plot}
\end{figure}

\begin{figure}[!ht]
    \centering
    \includegraphics[width=0.8\linewidth]{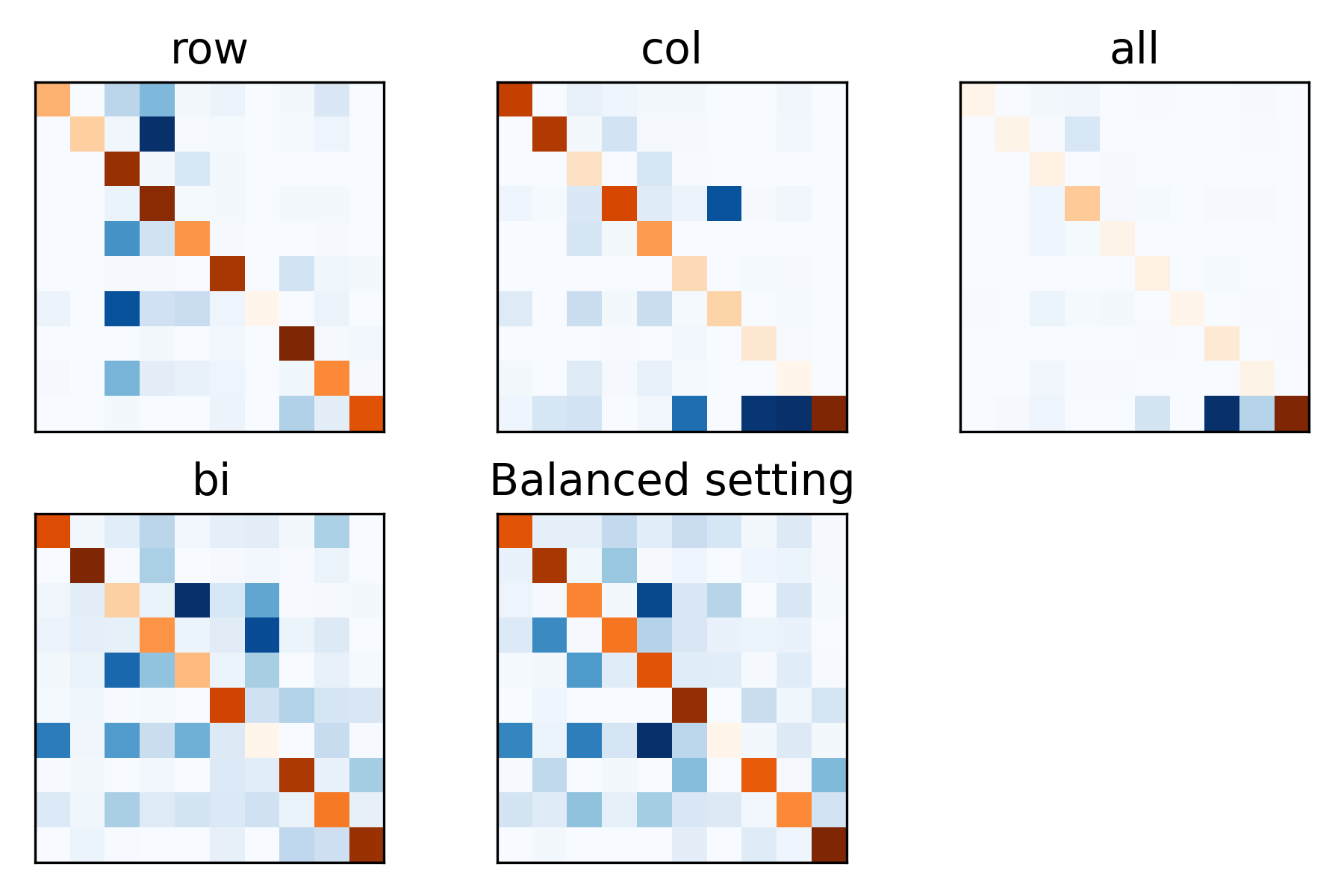}
    \caption{One instance (seed 0) on Fashion-MNIST with $\alpha = 0.3$. 
    Bi-normalization best recovers class relationships under distribution shifts. 
    Diagonal variations are shown in orange; off-diagonal in blue, with darker shades indicating higher values.}
    \label{exp1}
\end{figure}

\section{EMPIRICAL RESULTS}\label{Empirical Results}
Figure~\ref{plot} shows the results of Experiment~1 (Subfigure~\ref{fig:subfig1}) and Experiment~2 (Subfigures~\ref{fig:subfig2} and \ref{fig:subfig3}), described in Section~\ref{Baselines and Experiments}.

\textbf{Experiment 1.} This experiment evaluates how well different normalizations recover class similarities. Starting from a confusion matrix obtained under an imbalanced setting, normalization aims to approximate the one obtained under a balanced setting.

Across datasets and heterogeneity levels, Subfigure~\ref{fig:subfig1} shows that bi-normalization consistently achieves the highest overlap, outperforming other methods. As heterogeneity increases, (i) the gap between bi-normalization and the other methods tends to widen, and (ii) the ability to recover class similarity diminishes regardless of the normalization method.

This behavior is expected: the greater the imbalance between the training and test sets, the more distribution bias is introduced into the matrix. This bias tends to obscure class similarities, making $\row$, $\col$, and $\all$ normalizations unsuitable. Moreover, higher heterogeneity leads to prediction and test class distributions that deviate further from a balanced setting, making normalization procedures less reliable, as discussed in Section~\ref{IS}.

Figure~\ref{exp1} further illustrates that bi-normalization captures the class similarity patterns present in the balanced confusion matrix, whereas other methods often produce vertical or horizontal stripes. 

By definition, bi-normalization is expected to achieve the lowest KL divergence. Figure~\ref{fig:kl} in the Appendix confirms this, providing further evidence of its ability to recover class similarities.

\textbf{Experiment 2.} This experiment evaluates the correspondence between the organization of class representations in the model’s latent space and normalized confusion matrices.

Subfigure~\ref{fig:subfig2} demonstrates a clear correspondence between $\operatorname{GCM}\!{\omega_r}$ and row normalization. Overlap increases as heterogeneity grows. We observe similar results for $\operatorname{GCM}\!{\omega_a}$ with all normalizations, and for $\operatorname{GCM}\!{\omega_c}$ with column normalization (see Appendix).

Subfigure~\ref{fig:subfig3} shows that the correspondence between $\operatorname{GCM}\!{\omega_b}$ and bi-normalization is weaker. This makes sense, as the granularity differs from the other approaches. For instance, in $\operatorname{GCM}\!{\omega_r}$, all embedded points of the same label receive the same weight, while in $\operatorname{GCM}\!{\omega_b}$, embedded points are weighted by both label and prediction. This indicates that spatial separation is finer under $\operatorname{GCM}\!{\omega_b}$ weighting than with other weightings. Nonetheless, bi-normalization still exhibits higher correspondence than other methods.

This experiment confirms the validity of the geometric interpretation of normalizations described in Section \ref{geom}.

\section{CONCLUSION}\label{Conclusion and Future Work}
We revisited confusion matrix normalization and showed that bi-normalization provides a principled way to isolate class similarity from distribution bias. While the use of IPF for contingency tables is well established in other fields, its value for interpreting classifier behavior in machine learning has been largely overlooked. We demonstrated that bi-normalization reveals class similarities more clearly than standard methods, especially in heterogeneous settings.

We also showed that normalization corresponds to an importance sampling strategy and introduced a geometric interpretation of class representations in latent space, offering deeper insight into what normalization reveals about model behavior.

\subsection*{Acknowledgements}
We would like to thank Olivier Mbarek and Eric Lombardi, who are responsible for the PAGODA platform at LIRIS, for their valuable support and assistance. This work was supported by the French government managed by the Agence Nationale de la Recherche (ANR) through France 2030 program with the reference ANR-23-PEIA-005 (REDEEM project).

\bibliographystyle{plainnat}
\bibliography{References}
\onecolumn
\section*{Checklist}

\begin{enumerate}
  \item For all models and algorithms presented, check if you include:
  \begin{enumerate}
    \item A clear description of the mathematical setting, assumptions, algorithm, and/or model. [\textbf{Yes}/No/Not Applicable] 
    \item An analysis of the properties and complexity (time, space, sample size) of any algorithm. [\textbf{Yes}/No/Not Applicable]  The only algorithm used is IPF, which, as established in the literature, converges linearly in our setting.
    \item (Optional) Anonymized source code, with specification of all dependencies, including external libraries. [\textbf{Yes}/No/Not Applicable]  Publicly available.
  \end{enumerate}

  \item For any theoretical claim, check if you include:
  \begin{enumerate}
    \item Statements of the full set of assumptions of all theoretical results. [\textbf{Yes}/No/Not Applicable]
    \item Complete proofs of all theoretical results. [\textbf{Yes}/No/Not Applicable] All proofs are provided in Appendix.
    \item Clear explanations of any assumptions. [\textbf{Yes}/No/Not Applicable]    
  \end{enumerate}

  \item For all figures and tables that present empirical results, check if you include:
  \begin{enumerate}
    \item The code, data, and instructions needed to reproduce the main experimental results (either in the supplemental material or as a URL). [\textbf{Yes}/No/Not Applicable]
    \item All the training details (e.g., data splits, hyperparameters, how they were chosen). [\textbf{Yes}/No/Not Applicable]
    \item A clear definition of the specific measure or statistics and error bars (e.g., with respect to the random seed after running experiments multiple times). [\textbf{Yes}/No/Not Applicable]
    \item A description of the computing infrastructure used. (e.g., type of GPUs, internal cluster, or cloud provider). [Yes/No/\textbf{Not Applicable}]
  \end{enumerate}

  \item If you are using existing assets (e.g., code, data, models) or curating/releasing new assets, check if you include:
  \begin{enumerate}
    \item Citations of the creator If your work uses existing assets. [\textbf{Yes}/No/Not Applicable]
    \item The license information of the assets, if applicable. [Yes/No/\textbf{Not Applicable}]
    \item New assets either in the supplemental material or as a URL, if applicable. [Yes/No/\textbf{Not Applicable}]
    \item Information about consent from data providers/curators. [Yes/No/\textbf{Not Applicable}]
    \item Discussion of sensible content if applicable, e.g., personally identifiable information or offensive content. [Yes/No/\textbf{Not Applicable}]
  \end{enumerate}

  \item If you used crowdsourcing or conducted research with human subjects, check if you include:
  \begin{enumerate}
    \item The full text of instructions given to participants and screenshots. [Yes/No/\textbf{Not Applicable}]
    \item Descriptions of potential participant risks, with links to Institutional Review Board (IRB) approvals if applicable. [Yes/No/\textbf{Not Applicable}]
    \item The estimated hourly wage paid to participants and the total amount spent on participant compensation. [Yes/No/\textbf{Not Applicable}]
  \end{enumerate}

\end{enumerate}

\newpage
\appendix
The supplementary materials are organized as follows:  
\textbf{Section~1} provides additional results that further support our propositions.
\textbf{Section~2} describes the Iterative Proportional Fitting (IPF) algorithm and an equivalent formulation known as the RAS algorithm, whose variations with respect to IPF are later exploited.  
\textbf{Section~3} details the overlap metric used in our experiments.  
\textbf{Section~4} presents experimental details, including the multidimensional histogram setup and the model architectures.  
\textbf{Section~5} provides proofs of key properties such as information preservation, idempotence, and class distribution invariance.  
\textbf{Section~6} describes additional weighting schemes mentioned in Subsection~\nameref{IS}.
\textbf{Section~7} derives and proves the explicit formula for the Geometric Confusion Matrix (GCM).  
\textbf{Section~8} demonstrates an auxiliary lemma used in the preceding sections.

\section{ADDITIONAL RESULTS}
Figure~\ref{plot_other} presents the remaining results of Experiment~2 (see Subsection~\nameref{Baselines and Experiments}) with target matrices $\operatorname{GCM}_a$ and $\operatorname{GCM}_c$. We observe trends consistent with those obtained for $\operatorname{GCM}_r$: $\operatorname{GCM}_a$ aligns closely with the $\all$ normalization (Subfigure~\ref{plot_other:subfig1}), while $\operatorname{GCM}_c$ aligns closely with the $\col$ normalization (Subfigure~\ref{plot_other:subfig2}). These results further support the geometric interpretation of normalizations.

Figure~\ref{fig:kl} reports the results for Experiment~1 (see Subsection~\nameref{Baselines and Experiments}) using the KL divergence instead of the overlap metric. It shows that bi-normalization still achieves the highest similarity with the confusion matrix under a balanced setting, as expected.

\begin{figure}[h!]
    \centering
    \begin{subfigure}[b]{0.49\linewidth}
        \centering
        \includegraphics[width=0.99\linewidth]{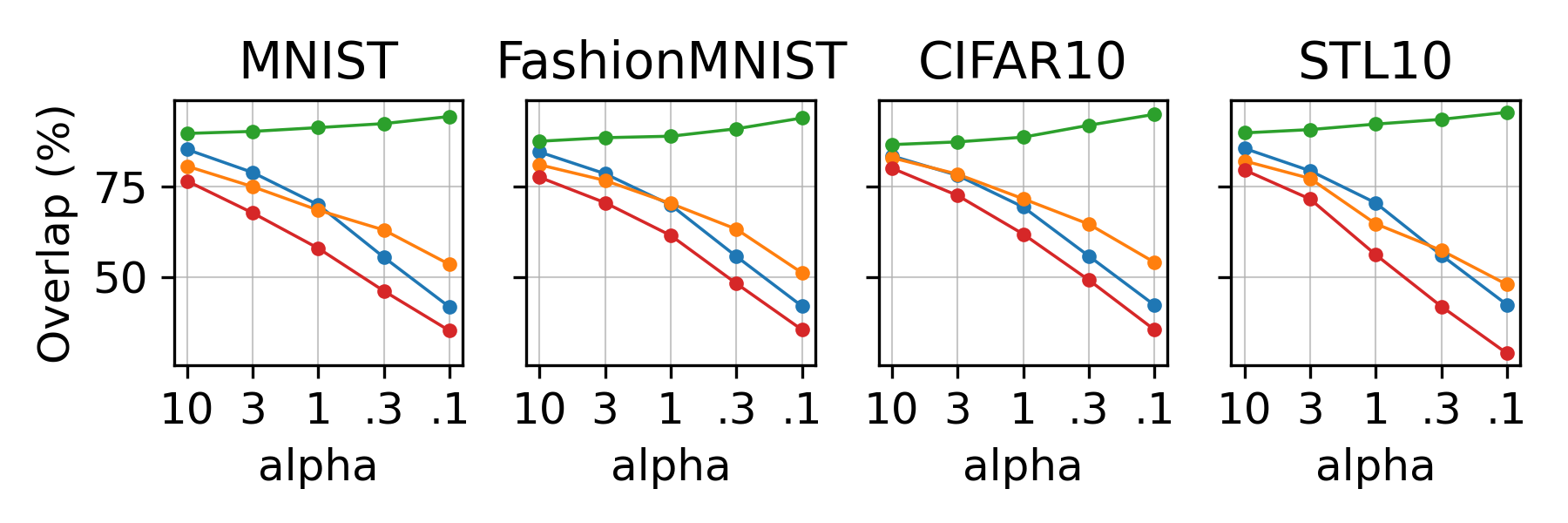}
        \vspace{-.6cm}
        \caption{Target: $\operatorname{GCM}\!{\omega_a}$}
        \label{plot_other:subfig1}
    \end{subfigure}
    \begin{subfigure}[b]{0.49\linewidth}
        \centering
        \includegraphics[width=0.99\linewidth]{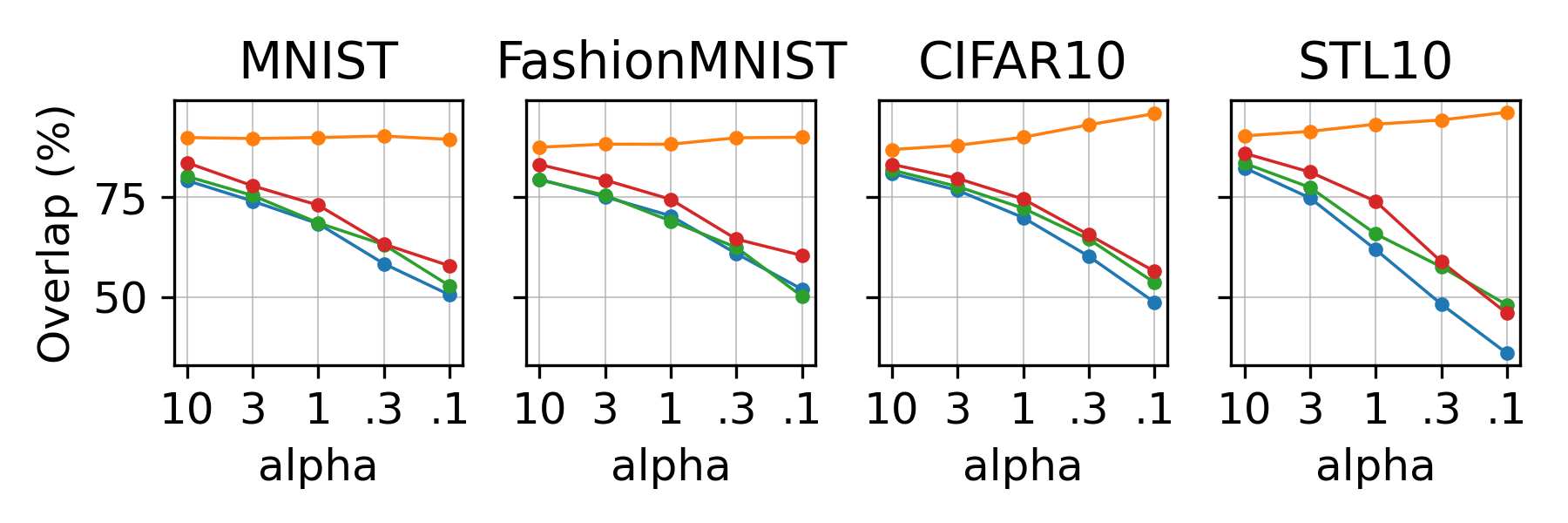}
        \vspace{-.6cm}
        \caption{Target: $\operatorname{GCM}\!{\omega_c}$}
        \label{plot_other:subfig2}
    \end{subfigure}
    \caption{Overlap (\%) between target matrices and normalized matrices. Legend:~\textcolor{color4}{$\;\bullet\!$}~bi 
    \textcolor{color1}{$\;\bullet\!$}~row 
    \textcolor{color2}{$\;\bullet\!$}~col 
    \textcolor{color3}{$\;\bullet\!$}~all
    }
    \label{plot_other}
\end{figure}
\begin{figure}
    \centering
    \centering
    \includegraphics[width=0.49\linewidth]{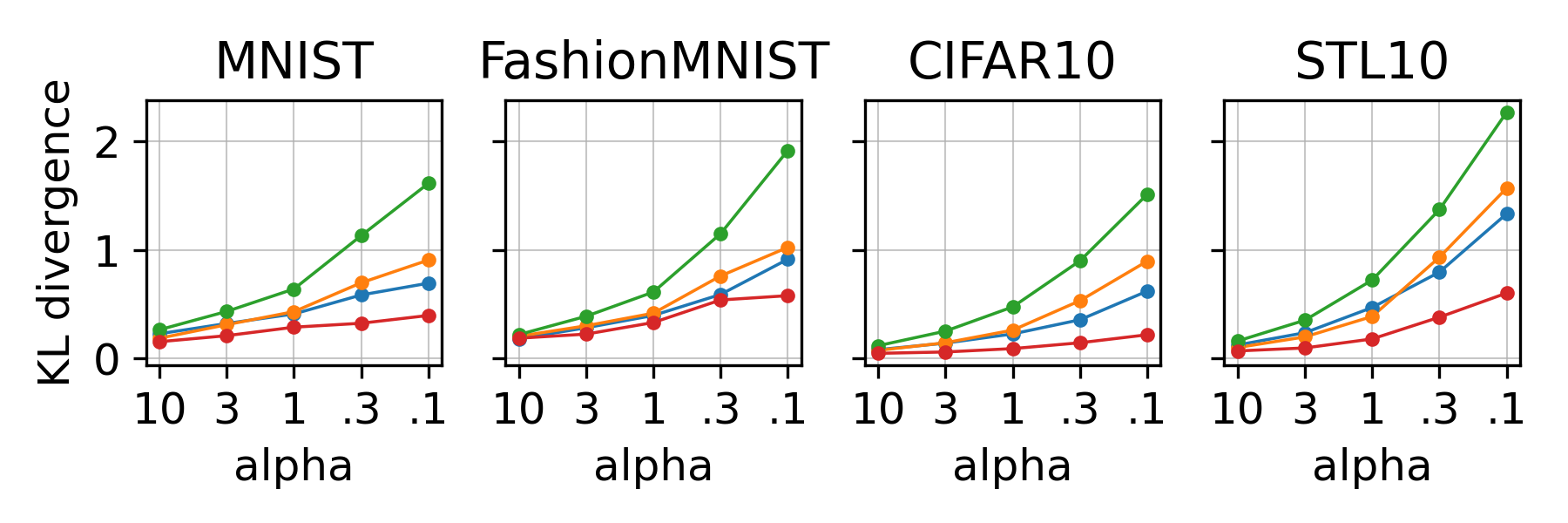}
    \vspace{-.2cm}
    \caption{KL divergence (\%) between matrix from balanced setting and normalized matrices. Legend:~\textcolor{color4}{$\;\bullet\!$}~bi 
    \textcolor{color1}{$\;\bullet\!$}~row 
    \textcolor{color2}{$\;\bullet\!$}~col 
    \textcolor{color3}{$\;\bullet\!$}~all}
    \label{fig:kl}
\end{figure}

\section{IPF \& RAS ALGORITHMS}\label{al}
This section presents the IPF (Algorithm~\ref{algorithm}) and RAS (Algorithm~\ref{algorithm2}) procedures. 

According to Lemma~\ref{lem} (Section~\ref{equi}), there exist positive diagonal matrices ${D^l}$ and ${D^r}$ such that $\bis(M) = {D^l} M {D^r}$. The RAS method estimates ${D^l}$ and ${D^r}$. This algorithm will be used in the following section and is equivalent to Algorithm~\ref{algorithm}; see~\citep{idel2016review} for further details.

\begin{algorithm}[t!]
\caption{Iterative Proportional Fitting (IPF)}
\label{algorithm}
\begin{algorithmic}[1]
\REQUIRE Initial matrix $M \in \mathbb{R}_{>0}^{C \times C}$, target row sums $u \in \mathbb{R}_{>0}^{C}$, target column sums $v \in \mathbb{R}_{>0}^{C}$ such that $u_+ = v_+$, tolerance $\epsilon > 0$, maximum number of steps $T$
\STATE Initialize $Q^{(0)} \gets M$, $t \gets 0$
\REPEAT
    \FOR{each row $i = 1$ to $C$}
        \STATE $Q^{(t+1)}_{i,:} \gets Q^{(t)}_{i,:} \cdot \frac{u_i}{Q^{(t)}_{i+}}$
    \ENDFOR
    \FOR{each column $j = 1$ to $C$}
        \STATE $Q^{(t+2)}_{:,j} \gets Q^{(t+1)}_{:,j} \cdot \frac{v_j}{Q^{(t+1)}_{+j}}$
    \ENDFOR
    \STATE $t \gets t + 2$
\UNTIL{$\|Q^{(t)}_{:+} - u\|_1 + \|Q^{(t)}_{+:} - v\|_1 \leq \epsilon$ \OR $t \geq T$}
\RETURN $\widehat{Q} \gets Q^{(t)}$
\end{algorithmic}
\end{algorithm}

\begin{algorithm}[t!]
\caption{RAS method}
\label{algorithm2}
\begin{algorithmic}
\REQUIRE Positive matrix $M \in \mathbb{R}_{>0}^{C \times C}$, target row sums $r \in \mathbb{R}_{>0}^C$, target column sums $c \in \mathbb{R}_{>0}^C$ such that $r_+ = c_+$, tolerance $\epsilon > 0$, maximum number of steps $T$
\STATE Initialize ${D^r}^{(0)} \gets I$, $t \gets 0$
\REPEAT
    \FOR{each $i = 1$ to $C$}
        \STATE $D_{ii}^{l\, (t+1)} \gets \dfrac{r_i}{\sum_j M_{ij} D_{jj}^{r\, (t)}}$
    \ENDFOR
    \FOR{each $j = 1$ to $C$}
        \STATE $D_{jj}^{r\, (t+2)} \gets \dfrac{c_j}{\sum_i M_{ij} D_{ii}^{l\, (t+1)}}$
    \ENDFOR
    \STATE $t \gets t + 2$
\UNTIL{$\|({D}^{l\, (t)} M {D}^{r\, (t)})_{:+} - r\|_1 + \|({D}^{l\, (t)} M {D}^{r\, (t)})_{+:} - c\|_1 \leq \epsilon$ \OR $t \geq T$}
\RETURN $\widehat{D}^l \gets {D}^{l\, (t)}$, $\widehat{D}^r \gets {D}^{r\, (t)}$
\end{algorithmic}
\end{algorithm}

\section{OVERLAP METRIC}
This section explains the rationale for normalizing confusion matrices when computing the Overlap metric, and shows its equivalence with the $L_1$ distance.

\subsection{Why Normalize?}
The entries of a confusion matrix are only meaningful relative to each other; absolute values lack interpretability. For example, $P_{ii} = 3$ indicates perfect classification if $P_{i+} = 3$, but poor performance if $P_{i+} = 30$. More generally, for any $\lambda > 0$, $P$ and $\lambda P$ represent the same model behavior.

Comparisons between two matrices are meaningful when they share the same total sum, i.e., $P_{++} = Q_{++}$. For instance, $P_{ii} = 1$ may seem worse than $Q_{ii} = 10$, but if $P_{++} = 3$ and $Q_{++} = 30$, the relative accuracies are identical: $P_{ii}/P_{++} = Q_{ii}/Q_{++}$.

For consistency, we compare normalized confusion matrices: $\all(P)=P / P_{++}$ and $\all(Q)=Q / Q_{++}$. This ensures that behaviors are preserved and enables consistent comparisons.

\subsection{Equivalence with $L_1$ Distance}
Let $P$ and $Q$ be two matrices such that $P_{++} = Q_{++} = 1$. Then,
$$
\begin{aligned}
\|P-Q\|_1=\sum_{ij}|P_{ij}-Q_{ij}|&=\sum_{ij}\max(P_{ij}, Q_{ij})-\min(P_{ij}, Q_{ij})\\
&=\sum_{ij:P_{ij}\geq Q_{ij}}P_{ij}-\min(P_{ij}, Q_{ij})\quad\quad+\sum_{ij:Q_{ij}> P_{ij}}Q_{ij}-\min(P_{ij}, Q_{ij})\\
&=\sum_{ij}P_{ij}-\min(P_{ij}, Q_{ij})+\sum_{ij}Q_{ij}-\min(P_{ij}, Q_{ij})\\
&=2-2\operatorname{Overlap}(P,Q)
\end{aligned}
$$
Thus, the Overlap metric is directly related to the $L_1$ distance.

\section{EXPERIMENTS DETAILS}
This section presents details on the multidimensional histogram setup and the model architectures.

\subsection{HISTOGRAM IN EXPERIMENTS}
First, the projection space is divided into a regular grid of $m$-dimensional hyperrectangles,\footnote{In geometry, a hyperrectangle generalizes a rectangle (in 2D) and a rectangular cuboid (in 3D) to higher dimensions~\citep{saye2015high}.} which define the histogram bins. We then construct non-overlapping histograms~\citep{thaper2002dynamic} associated with the clusters $\mathcal{C}_k$ and $\widehat{\mathcal{C}}_k$ for $k=1,\ldots, C$.

Considering a cluster $\mathcal{C}$ and its unweighted histogram, each bin height represents the number of points from $\mathcal{C}$ that fall within the bin (i.e., hyperrectangle) under consideration. In the weighted case, the bin height instead corresponds to the sum of the point weights, using one of these weighting schemes $\omega_a$, $\omega_r$, $\omega_c$, or $\omega_b$.

In the experiments, we fix the projected space dimensionality to $m = 10$ and specify the bin width accordingly. Following the recommendation of \citep{scott2015multivariate} (see Subsection 3.4, Eq. (3.66)), we adopt the optimal bin width
$3.5\ \sigma_k\ n^{-1/(2 + m)},$
where $\sigma_k$ denotes the empirical standard deviation along the $k$-th dimension, $n$ is the sample size, and $m$ is the data dimensionality.

\subsection{MODEL ARCHITECTURES}\label{Model Architectures}
We use the following abbreviations to describe our architectures: $L(n)$ denotes a fully connected linear layer with $n$ outputs; $R$ is a ReLU activation; $C(c)$ represents a 2D convolutional layer with $c$ output channels, kernel size $5$ for CIFAR datasets and $3$ otherwise, padding $0$ for CIFAR and $1$ otherwise, and stride $1$; $M$ denotes 2D max pooling with kernel size $2$; $B$ stands for batch normalization; and $D$ represents dropout with a fixed probability of $0.25$.

In line with \citet{allouah2023fixing}, the model architectures are defined as follows. For MNIST and Fashion-MNIST, the architecture is $(1, 28 \times 28)$ – $C(8)$ – $R$ – $M$ – $C(16)$ – $R$ – $M$ – $L(64)$ – $R$ – $L(10)$. For CIFAR-10 and STL-10, the architecture is $(3, 32 \times 32)$ – $C(64)$ – $R$ – $B$ – $C(64)$ – $R$ – $B$ – $M$ – $D$ – $C(128)$ – $R$ – $B$ – $C(128)$ – $R$ – $B$ – $M$ – $D$ – $L(128)$ – $R$ – $D$ – $L(10)$.

\section{INFORMATION PRESERVATION, IDEMPOTENCE \& CLASS DISTRIBUTION INVARIANCE}
This section demonstrates that normalization methods satisfy the properties of information preservation, idempotence, and class distribution invariance.

\subsection{Information Preservation for Standards Normalizations}\label{equality}
We state that, for a positive confusion matrix $M$, the normalizations minimize the KL divergence under specific constraints:
\begin{equation*}
\all(M)\in
\myargmin{
P \in \mathbb{R}_{>0}^{C \times C}: \\
P_{i+} =\frac{M_{i+}}{M_{++}}, P_{+j} = \frac{M_{+j}}{M_{++}} \ \forall\, i,j
}
D_{\mathrm{KL}}(P \| M),\quad
\row(M)\in
\myargmin{
P \in \mathbb{R}_{>0}^{C \times C}: \\
P_{i+} =1, P_{+j} = \sum_{i}\frac{M_{ij}}{M_{i+}} \ \forall\, i,j
}
D_{\mathrm{KL}}(P \| M),\quad 
\col(M)\in
\myargmin{
P \in \mathbb{R}_{>0}^{C \times C}: \\
P_{i+} =\sum_{j}\frac{M_{ij}}{M_{+j}}, P_{+j} = 1\ \forall\, i,j
}
D_{\mathrm{KL}}(P \| M).
\end{equation*}

We observe that these normalizations can be expressed as the product of diagonal matrices and the original confusion matrix:
\begin{equation}\label{diagmat}
\row(M)\!=\!\operatorname{diag}(\tfrac{1}{M_{1+}}, \ldots, \tfrac{1}{M_{C+}}) M I,\quad 
\col(M)\!=\!I M \operatorname{diag}(\tfrac{1}{M_{+1}}, \ldots, \tfrac{1}{M_{+C}}),\text{ and }\all(M)\!=\!I M I / M_{++}.
\end{equation}
According to Lemma~\ref{lem} (Section~\ref{equi}), these normalization procedures indeed minimize the KL divergence, which concludes the proof.

\subsection{Idempotence \& Class Distribution Invariance of Bi-Normalization}
In Proposition 1, we state that the bi-normalization satisfies idempotence,
$$
\bis \circ \bis(M) = \bis(M),
$$
and class distribution invariance, that is, for any diagonal matrices $A, B \in \mathbb{R}_{>0}^{C \times C}$,
$$
\bis(A M B) = \bis(M).
$$
We prove these properties in what follows.

\textbf{Idempotence.} By definition, $\bis \circ \bis(M)$ is the solution to the problem
\begin{equation*}
\myargmin{
P \in \mathbb{R}_{>0}^{C \times C}: \\
P_{i+} = P_{+j} = 1 \ \forall\, i,j
}
D_{\mathrm{KL}}(P \| \bis(M)) 
\end{equation*}
Moreover, we observe that $I \bis(M) I$ satisfies the marginal constraints. By Lemma~\ref{lem}, $I \bis(M) I$ solves the above problem, and since the solution is unique, we have
$$
\bis \circ \bis(M) = I \,\bis(M)\, I = \bis(M),
$$
which establishes the idempotence property.

\textbf{Class Distribution Invariance.} By Lemma~\ref{lem}, there exist diagonal matrices ${D^l}$ and ${D^r}$ such that
$$
\operatorname{bi}(A M B) = {D^l} A M B {D^r}.
$$
By definition of the bi-normalization, the row and column sums satisfy
$$
({D^l} A M B {D^r})_{i+} = ({D^l} A M B {D^r})_{+j} = 1 \quad \forall i,j.
$$
Additionally, ${D^l} A$ and $B {D^r}$ are diagonal.
Therefore, by Lemma~\ref{lem}, $\operatorname{bi}(A M B)$ is the unique solution to
\begin{equation*}
\myargmin{
P \in \mathbb{R}_{>0}^{C \times C}: \\
P_{i+} =1,  P_{+j} = 1 \ \forall\, i,j
}
D_{\mathrm{KL}}(P \| M)
\end{equation*}
This implies
$
\operatorname{bi}(A M B) = \operatorname{bi}(M),
$
completing the proof.

\section{IMPORTANCE SAMPLING}
This section details the weighting schemes mentioned in Subsection~\nameref{IS}.

From the equalities presented in Equation~(\ref{diagmat}) (Subsection~\ref{equality}) and by applying the same procedure described in the main part of the paper, it directly follows that
$$
\omega_a : (y,\hat{y}) \mapsto \frac{1}{M_{++}}\quad\ \text{and}\quad \all(M) = \sum_{k=1}^N \omega_a(y_k,\hat{y}_k) E_{y_k \hat{y}_k},
$$
and
$$
\omega_c : (y,\hat{y}) \mapsto \frac{1}{M_{+\hat{y}}}\quad\ \text{and}\quad  \col(M) = \sum_{k=1}^N \omega_c(y_k,\hat{y}_k) E_{y_k \hat{y}_k}.
$$

According to Lemma~\ref{lem} (Section~\ref{equi}), there exist positive diagonal matrices ${D^l}$ and ${D^r}$ such that $\bis(M) = {D^l} M {D^r}$, using Algorithm~\ref{algorithm2} (Section~\ref{al}), we estimate these matrices by matrices $\widehat{D}^l$ and ${\widehat{D}^r}$. It follows
$$
\omega_b : (y,\hat{y}) \mapsto {\widehat{D}^l_{y}\widehat{D}^r_{\hat{y}}}\quad\ \text{and}\quad \bis(M) = \sum_{k=1}^N \omega_b(y_k,\hat{y}_k) E_{y_k \hat{y}_k}.
$$
\section{GEOMETRIC CONFUSION MATRIX}
This section establishes the volumes of the various scaled histograms introduced previously, and compares the analytical form of the classical confusion matrix with that of the geometric confusion matrix.

\subsection{Normalized Histograms Under Weighting Schemes}
In Section~\nameref{geom}, we state that, under specific weighting schemes, some histograms are normalized up to a scaling factor. We prove this statement below.

We first introduce the following lemma:
\begin{lemma}\label{lem2}
Let $\mathcal{C}$ be a cluster of points in the model’s latent space, based on dataset $\mathcal{D}$. Let $P$ be a projection matrix, and let $\omega$ be the weighting function. Then,
\[
\lambda(V(\mathcal{C}, \omega)) = r \sum_{(x, y) \in \mathcal{D}} \omega(y,\hat{y})\mathds{1}_{P\varepsilon(x) \in \mathcal{C}},
\]
where $\lambda$ denotes the Lebesgue measure, $\mathds{1}$ is the indicator function, and $r$ the volume of hyperrectangles which split the projection space.
\end{lemma}
\begin{proof}
We begin by decomposing the histogram into a union of hyperrectangles which grid the projection space, denoted by $\mathcal{V}$. Since $\mathcal{D}$ contains a finite number of points, the covering set $\mathcal{V}$ is also finite. By construction of the weighted histogram, we have
\[
\begin{aligned}
V(\mathcal{C}, \omega) &= \bigcup_{v \in \mathcal{V}} \left(V(\mathcal{C}, \omega) \cap v\right) \\
&= \bigcup_{v \in \mathcal{V}} 
\underset{\text{bin/hyperrectangle splitting the projection space}}{\underbrace{r^v_1 \times \cdots \times r^v_m}}
\times 
\underset{\text{bin height}}{\underbrace{\left[0, \sum_{(x, y) \in \mathcal{D}} \omega (y, \hat{y})\mathds{1}_{P\varepsilon(x) \in \mathcal{C} \cap v} \right]}}
\end{aligned}
\]
where $r^v_1, \ldots, r^v_m$ denote the intervals describing the sides of each hyperrectangle.

Its Lebesgue measure equals the sum of the measures of the individual hyperrectangles:
\[
\lambda(V(\mathcal{C}, \omega)) =\sum_{v \in \mathcal{V}} \lambda(r^v_1 \times \cdots \times r^v_m \times \left[0, \sum_{(x, y) \in \mathcal{D}} \omega(y, \hat{y})\mathds{1}_{P\varepsilon(x) \in \mathcal{C} \cap v} \, \right])
\]
The Lebesgue measure of a hyperrectangle is the product of its side lengths. Only the heights vary, while the other side lengths are bin-independent. Let $r = \prod_{i=1}^m \lambda(r_i^v)$. Then,
\[
\lambda(V(\mathcal{C}, \omega)) 
= \sum_{v \in \mathcal{V}} r \sum_{(x, y) \in \mathcal{D}} \omega(y, \hat{y})
\mathds{1}_{P\varepsilon(x) \in \mathcal{C} \cap v} = r \sum_{(x, y) \in \mathcal{D}} \omega(y, \hat{y})
\mathds{1}_{P\varepsilon(x) \in \mathcal{C}}
\]
which concludes the proof.
\end{proof}

According to Lemma~\ref{lem2}, we have
$$
\begin{aligned}
&\lambda(V(\mathcal{C}_i, \omega_r)) = r \sum_{(x, y) \in \mathcal{D}} \omega_r(y,\hat{y})\mathds{1}_{P\varepsilon(x) \in \mathcal{C}_i}=r \sum_{(x, y) \in \mathcal{D}}\frac{1}{M_{i+}}\mathds{1}_{P\varepsilon(x) \in \mathcal{C}_i}=r,
\\
&\lambda(V(\widehat{\mathcal{C}}_j, \omega_c)) = r \sum_{(x, y) \in \mathcal{D}} \omega_c(y,\hat{y})\mathds{1}_{P\varepsilon(x) \in \widehat{\mathcal{C}}_j}=r \sum_{(x, y) \in \mathcal{D}}\frac{1}{M_{+j}}\mathds{1}_{P\varepsilon(x) \in \widehat{\mathcal{C}}_j}=r,
\\
&\lambda(V(\mathcal{C}_i, \omega_b)) = r \sum_{(x, y) \in \mathcal{D}} \omega_b(y,\hat{y})\mathds{1}_{P\varepsilon(x) \in \mathcal{C}_i}=r \sum_{(x, y) \in \mathcal{D}}\widehat{D}^l_i\widehat{D}^r_{\hat{y}}\mathds{1}_{P\varepsilon(x) \in \mathcal{C}_i}=r\bis(M)_{i+}=r, \text{ and}
\\
&\lambda(V(\widehat{\mathcal{C}}_j, \omega_b)) = r \sum_{(x, y) \in \mathcal{D}} \omega_b(y,\hat{y})\mathds{1}_{P\varepsilon(x) \in \widehat{\mathcal{C}}_j}=r \sum_{(x, y) \in \mathcal{D}}\widehat{D}^l_y\widehat{D}^r_{j}\mathds{1}_{P\varepsilon(x) \in \widehat{\mathcal{C}}_j}=r\bis(M)_{+j}=r.
\end{aligned}
$$
In that sense, these histograms are normalized using weighting schemes up to a scaling factor $r$, corresponding to the fixed volume of the hyperrectangles partitioning the projection space.

\subsection{Geometric Confusion Matrix}
The Geometric Confusion Matrix is defined as
\begin{equation*}
\begin{aligned}
\bigg(\operatorname{GCM}_{\omega}\bigg)_{ij} &= \lambda\bigg(V(\mathcal{C}_i,\omega)\cap V(\widehat{\mathcal{C}}_j, \omega)\bigg)
\end{aligned}
\end{equation*}
We begin by deriving an explicit formula for $\operatorname{GCM}_{\omega}$. Following the same approach as in proof of Lemma~\ref{lem2}, and noting that the grid of hyperrectangles partitioning the projection space is shared across all histograms, the intersection of two histograms is given by:
$$
\begin{aligned}
V(\mathcal{C}_i, \omega)\cap V(\widehat{\mathcal{C}}_j, \omega)&=
\bigcup_{v \in \mathcal{V}} r^v_1 \times \cdots \times r^v_m \times \left[0, \sum_{(x, y) \in \mathcal{D}} \omega(y, \hat{y})\mathds{1}_{P\varepsilon(x) \in \mathcal{C}_i \cap v}  \right]
\quad\bigcap\\
&\quad\quad\quad\quad\quad\quad\bigcup_{v \in \mathcal{V}} r^v_1 \times \cdots \times r^v_m \times \left[0, \sum_{(x, y) \in \mathcal{D}} \omega(y, \hat{y})\mathds{1}_{P\varepsilon(x) \in \widehat{\mathcal{C}}_j \cap v} \right]\\
&=\bigcup_{v \in \mathcal{V}} r^v_1 \times \cdots \times r^v_m \times \Bigg[0,\min\Bigg(\sum_{(x, y) \in \mathcal{D}} \omega(y, \hat{y})\mathds{1}_{P\varepsilon(x) \in \mathcal{C}_i \cap v},\sum_{(x, y) \in \mathcal{D}} \omega(y, \hat{y})\mathds{1}_{P\varepsilon(x) \in \widehat{\mathcal{C}}_j \cap v} \Bigg)\Bigg]
\end{aligned}
$$
As a result, the $(i,j)$ entry of the geometric confusion matrix becomes:
\begin{equation*}
\begin{aligned}
\bigg(\operatorname{GCM}_{l,p}\bigg)_{ij} = r\ \sum_{v\in\mathcal{V}}\min\Bigg(\sum_{(x, y) \in \mathcal{D}} \omega(y, \hat{y})\mathds{1}_{P\varepsilon(x) \in \mathcal{C}_i \cap v}, \sum_{(x, y) \in \mathcal{D}} \omega(y, \hat{y})\mathds{1}_{P\varepsilon(x) \in \widehat{\mathcal{C}}_j \cap v}\Bigg)
\end{aligned}
\end{equation*}

The normalized confusion matrix can also be expressed as a sum over grid cells, analogously to $\operatorname{GCM}_{\omega}$. Let $\operatorname{norm}$ denote a normalization operator among $\all$, $\row$, $\col$, and $\bis$, and let $\omega$ be its associated weighting scheme. It is straightforward to obtain:
\begin{equation*}
\operatorname{norm}(M)_{ij} = \sum_{(x, y) \in \mathcal{D}} \omega(i,j)\mathds{1}_{y=i}\mathds{1}_{\hat{y}=j} 
= \sum_{(x, y) \in \mathcal{D}} \omega(i,j)\mathds{1}_{P\varepsilon(x) \in \mathcal{C}_i \cap \widehat{\mathcal{C}}_j}
= \sum_{v\in\mathcal{V}}\sum_{(x, y) \in \mathcal{D}}\omega(i,j)\mathds{1}_{P\varepsilon(x) \in \mathcal{C}_i \cap \widehat{\mathcal{C}}_j \cap v}
\end{equation*}

This explicit formula for $\operatorname{GCM}_{\omega}$ highlights how the geometric confusion matrix incorporates spatial information. The geometric confusion matrix compares in each grid cell the number of points labeled $i$ (regardless of prediction) with the number of points predicted as $j$ (regardless of true label). As a result, in a grid cell containing only points with label $i$ and prediction $j$, the contribution to $\operatorname{GCM}_{\omega}$ matches that of the standard confusion matrix. In contrast, in a cell containing a mix of labels and predictions, the minimum operation produces a contribution that differs from the simple count of points labeled $i$ and predicted as $j$—that is, from the contribution in the standard confusion matrix.

\section{EQUIVALENCE SCALING LEMMA}\label{equi}
This section demonstrates the following lemma:
\begin{lemma}\label{lem}
Let M be a positive matrix in $\mathbb{R}_{>0}^{C \times C}$, and $r$ and $c$, be two positive vectors in $\mathbb{R}_{>0}^{C}$ such that $r_{+}=c_{+}$. Then, any matrix $M^*$ of the form $M^*={D^l} M {D^r}$, where ${D^l}$ and ${D^r}$ are positive diagonal matrices such that
$$
({D^l} M {D^r})_{i+} = r_i \quad \text{and} \quad ({D^l} M {D^r})_{+j} = c_j \quad \text{for all } i,j,
$$
is the unique solution to the following problem:
\begin{equation*}
M^*\in 
\myargmin{
P \in \mathbb{R}_{>0}^{C \times C}: \\
P_{i+} = r_i,  P_{+j} = c_j \ \forall\, i,j
}
D_{\mathrm{KL}}(P \| M)
\end{equation*}
\end{lemma}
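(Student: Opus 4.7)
The plan is to establish a Pythagorean-type identity for the KL divergence that pins $M^*$ down as a minimizer and simultaneously guarantees uniqueness, without any Lagrangian machinery. Concretely, I want to show that for every feasible $P$ (i.e.\ $P \in \mathbb{R}_{>0}^{C \times C}$ with $P_{i+} = r_i$ and $P_{+j} = c_j$ for all $i,j$) and every $M^* = D_1 M D_2$ whose marginals agree with these same $r$ and $c$,
\begin{equation*}
D_{\mathrm{KL}}(P \| M) \;=\; D_{\mathrm{KL}}(P \| M^*) \;+\; D_{\mathrm{KL}}(M^* \| M).
\end{equation*}
Once this decomposition is in hand, nonnegativity of $D_{\mathrm{KL}}(P \| M^*)$, with equality iff $P = M^*$ (Gibbs), gives both optimality of $M^*$ and its uniqueness in one stroke.

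The key step is the verification of the identity, and it hinges entirely on the multiplicative form of $M^*$. Writing $D_1 = \operatorname{diag}(d_{1,1}, \dots, d_{1,C})$ and $D_2 = \operatorname{diag}(d_{2,1}, \dots, d_{2,C})$, we have $M^*_{ij} = d_{1,i}\, M_{ij}\, d_{2,j}$, so
\begin{equation*}
\log\frac{M^*_{ij}}{M_{ij}} \;=\; \log d_{1,i} + \log d_{2,j}.
\end{equation*}
Multiplying by any nonnegative matrix $X$ with row sums $\rho_i$ and column sums $\gamma_j$ and summing yields $\sum_{ij} X_{ij} \log(M^*_{ij}/M_{ij}) = \sum_i \rho_i \log d_{1,i} + \sum_j \gamma_j \log d_{2,j}$, which depends on $X$ only through its marginals. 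Applying this with $X = P$ (marginals $r, c$) and with $X = M^*$ (also marginals $r, c$, by hypothesis) gives the same value, so
\begin{equation*}
\sum_{ij} P_{ij}\log\frac{M^*_{ij}}{M_{ij}} \;=\; \sum_{ij} M^*_{ij}\log\frac{M^*_{ij}}{M_{ij}} \;=\; D_{\mathrm{KL}}(M^* \| M).
\end{equation*}
Rearranging $D_{\mathrm{KL}}(P \| M) - D_{\mathrm{KL}}(P \| M^*) = \sum_{ij} P_{ij} \log(M^*_{ij}/M_{ij})$ then yields the desired Pythagorean identity.

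Finally, combining the identity with $D_{\mathrm{KL}}(P \| M^*) \geq 0$ gives $D_{\mathrm{KL}}(P \| M) \geq D_{\mathrm{KL}}(M^* \| M)$ for every feasible $P$, with equality iff $P = M^*$. Hence $M^*$ is a minimizer, and it is the only one; in particular any two matrices of the form $D_1 M D_2$ with the prescribed marginals must coincide. I expect the main subtlety to be just bookkeeping around the separable form of $\log(M^*/M)$ and the fact that the sum against $P$ only sees the marginals of $P$; everything else is routine. Note that this argument does not address existence of such an $M^*$, which is provided separately by convergence of IPF under the assumption $r_+ = c_+$ and positivity of $M$.
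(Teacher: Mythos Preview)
Your proof is correct and takes a genuinely different route from the paper's. The paper sets up the Lagrangian, invokes KKT conditions (necessary via the linearity constraint qualification, sufficient via convexity of the objective), solves the stationarity equations to obtain the diagonal-scaling form, and then separately proves strict convexity of $P \mapsto D_{\mathrm{KL}}(P\|M)$ via the log-sum inequality to conclude uniqueness. You bypass all of this with a single Pythagorean identity $D_{\mathrm{KL}}(P\|M) = D_{\mathrm{KL}}(P\|M^*) + D_{\mathrm{KL}}(M^*\|M)$, which follows directly from the separable form $\log(M^*_{ij}/M_{ij}) = \log d_{1,i} + \log d_{2,j}$ and the fact that any feasible $P$ shares marginals with $M^*$. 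Your argument is shorter, avoids constraint qualifications and second-order convexity checks, and makes the geometric structure explicit; the paper's approach, by contrast, is more mechanical and perhaps easier to generalize to other divergences where the Pythagorean identity is less transparent. One small point worth making explicit: Gibbs' inequality in the strict form (equality iff $P=M^*$) is usually stated for probability distributions, but it applies here because every feasible $P$ and $M^*$ have the same total mass $r_+ = c_+$, so both can be simultaneously normalized without affecting the argument.
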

\vspace{-0.5cm}
Note that this result is already known in a different form, referred to as equivalence scaling (see Section 3. Different approaches to equivalence scaling in~\citep{idel2016review}). We do not claim to introduce a new result with this lemma; rather, we were unable to find this particular formulation in the literature.

\subsection{Problem Reformulation}
We use the method of Lagrange multipliers. We define the row constraints for $i=1,\ldots,C$ as:
$$
\begin{array}{llll}
h_i: & \mathbb{R}_{>0}^{C \times C} & \to & \mathbb{R} \\
     & P & \mapsto & P_{i+} - r_i
\end{array}
$$
and the column constraints for $j=1,\ldots,C$ as:
$$
\begin{array}{llll}
g_j: & \mathbb{R}_{>0}^{C \times C} & \to & \mathbb{R} \\
     & P & \mapsto & P_{+j} - c_j
\end{array}
$$

Since adding a constant to the objective function does not affect the arguments of the minima, we define $f(P) := D_{\mathrm{KL}}(P \| M) - 1$ to simplify the derivative calculations. 

The optimization problem becomes:
\begin{equation*}
M^*\in 
\myargmin{
P \in \mathbb{R}_{>0}^{C \times C}: \\
h_i(P) = g_j(P) = 0 \ \forall\, i,j
}
f(P)
\end{equation*}
The functions $f$, $h_i$, and $g_j$ are continuously differentiable.

\subsection{Karush-Kuhn-Tucker Conditions}
To find a solution, we apply the Karush-Kuhn-Tucker (KKT) conditions (see \textit{Section 5.5.3: KKT Optimality Conditions} in \citep{boyd2004convex}). These conditions are necessary for a point $M^*$ to be a local minimum when a constraint qualification is satisfied. In our case, the Linearity Constraint Qualification (LCQ) holds, since the constraints $h_i$ and $g_j$ are affine.

Moreover, since $f$ is convex (as shown in a subsection below), and the constraint set is convex, the KKT conditions are also sufficient for optimality~\citep{boyd2004convex}. Thus, any point satisfying the KKT conditions is a global minimum.

In other words, any feasible matrix $P^*$ and multipliers $\mu^*$ and $\nu^*$ satisfying:
$$
\nabla_P L(P^*, \mu^*, \nu^*) = 0
$$
is a global minimum, where the Lagrangian is given by:
$$
L(P, \mu, \nu) = f(P) + \sum_{i=1}^C \mu_i h_i(P) + \sum_{j=1}^C \nu_j g_j(P)
$$
Since $f$ is strictly convex (Subsection \textit{Convexity of $f$}), this minimum is also unique.

\subsection{Solution}
Let $P$ be an admissible point. We compute the gradient of the Lagrangian\footnote{Though the gradient is a vector, we index it here by $(i,j)$ for clarity.}:
\begin{equation*}
\nabla_P L(P, \mu, \nu)_{ij} = \left[\ln\left(\frac{P_{ij}}{M_{ij}}\right) + \mu_i + \nu_j\right]_{ij}
\end{equation*}
Setting the gradient to zero gives:
$$
\begin{aligned}
&\quad\ln\left(\frac{P_{ij}}{M_{ij}}\right) + \mu_i + \nu_j = 0\quad
\Leftrightarrow \quad
P_{ij} = \exp(-\mu_i) M_{ij} \exp(-\nu_j)
\end{aligned}
$$
We define diagonal matrices ${D^l}$ and ${D^r}$ with entries:
$$
({D^l})_{ii} = \exp(-\mu_i), \qquad ({D^r})_{jj} = \exp(-\nu_j)
$$
Then the solution takes the form:
$$
P = {D^l} M {D^r}
$$

In conclusion, any matrix of the form ${D^l} M {D^r}$ that satisfies the given row and column constraints is the unique solution to the minimization problem.

\subsection{Convexity of $f$}
Let $P, P' \in \mathbb{R}_{>0}^{C \times C}$ be two distinct matrices, and let $\gamma \in (0,1)$. The function $f$ is strictly convex if and only if
$$
f(\gamma P + (1 - \gamma) P') < \gamma f(P) + (1 - \gamma) f(P').
$$

We now prove this inequality. We first show that the KL divergence is strictly convex in $P$ when $M$ is fixed.
$$
\begin{aligned} 
D_{\mathrm{KL}}(\gamma P+ (1-\gamma) P^\prime \| M)&=\sum_{ij} \bigg(\gamma P+ (1-\gamma) P^\prime\bigg)\ln\bigg(\frac{\gamma P+ (1-\gamma) P^\prime}{\gamma M+ (1-\gamma) M}\bigg)\\
&<\sum_{ij} \gamma  P\ln\bigg(\frac{P}{M}\bigg) +  (1-\gamma)  P^\prime\ln\bigg(\frac{P^\prime}{M}\bigg)\\
&=\gamma D_{\mathrm{KL}}(P\| M)+ (1-\gamma)D_{\mathrm{KL}}(P^\prime\| M)
\end{aligned}
$$
where the strict inequality follows from the log-sum inequality, which becomes strict when $P \neq P'$. This implies that $f$ is strictly convex, which completes the proof.

\end{document}